\def\onedot{\ifx\@let@token.\else.\null\fi\xspace}
\def\eg{\emph{e.g}\onedot}
\newcommand{\red}[1]{\textcolor{red}{#1}}
\theoremstyle{thmstyleone}%
\newtheorem{theorem}{Theorem}%  meant for continuous numbers
\theoremstyle{thmstyletwo}%
\theoremstyle{thmstylethree}%
\newtheorem{definition}{Definition}%
\begin{document}

\title[Article Title]{\texttt{Holmes}: Towards Effective and Harmless Model Ownership Verification to Personalized Large Vision Models via Decoupling Common Features}

%%=============================================================%%
%% GivenName	-> \fnm{Joergen W.}
%% Particle	-> \spfx{van der} -> surname prefix
%% FamilyName	-> \sur{Ploeg}
%% Suffix	-> \sfx{IV}
%% \author*[1,2]{\fnm{Joergen W.} \spfx{van der} \sur{Ploeg} 
%%  \sfx{IV}}\email{iauthor@gmail.com}
%%=============================================================%%

\author[1,3]{\fnm{Linghui} \sur{Zhu}}\email{zlh20@mails.tsinghua.edu.cn}
%\equalcont{These authors contributed equally to this work.}

\author*[2]{\fnm{Yiming} \sur{Li}}\email{liyiming.tech@gmail.com}

\author[3]{\fnm{Haiqin} \sur{Weng}}\email{haiqin.wenghaiqin@antgroup.com}

\author[3]{\fnm{Yan} \sur{Liu}}\email{bencao.ly@antgroup.com}

\author[2]{\fnm{Tianwei} \sur{Zhang}}\email{tianwei.zhang@ntu.edu.sg}

\author[1,4]{\fnm{Shu-Tao} \sur{Xia}}\email{xiast@sz.tsinghua.edu.cn}

\author[1]{\fnm{Zhi} \sur{Wang}}\email{wangzhi@sz.tsinghua.edu.cn}

% \affil*[1]{\orgdiv{Department}, \orgname{Organization}, \orgaddress{\street{Street}, \city{City}, \postcode{100190}, \state{State}, \country{Country}}}
\affil[1]{ \orgdiv{Tsinghua Shenzhen International Graduate School} \orgname{Tsinghua University}, \orgaddress{ \city{Shenzhen}, \postcode{518055}, \country{China}}}

\affil[2]{ \orgname{Nanyang Technological University}, \orgaddress{ \postcode{639798}, \country{Singapore}}}

\affil[3]{ \orgname{Ant Group}, \orgaddress{ \city{Hangzhou}, \postcode{310023}, \country{China}}}

\affil[4]{ \orgname{Peng Cheng Laboratory}, \orgaddress{ \city{Shenzhen}, \postcode{518055}, \country{China}}}

%\affil[5]{ \orgname{Tencent}, \orgaddress{ \city{Shenzhen}, \postcode{518054}, \country{China}}}
% \affil[3]{\orgdiv{Department}, \orgname{Organization}, \orgaddress{\street{Street}, \city{City}, \postcode{610101}, \state{State}, \country{Country}}}

%%==================================%%
%% Sample for unstructured abstract %%
%%==================================%%

\abstract{
Large vision models (LVMs) achieve remarkable performance in various downstream tasks, primarily by personalizing pre-trained models through fine-tuning with private and valuable local data, which makes the personalized model a valuable intellectual property. Similar to the era of traditional DNNs, model stealing attacks also pose significant risks to LVMs. However, this paper reveals that most existing defense methods (developed for traditional DNNs), typically designed for models trained from scratch, either introduce additional security risks, are prone to misjudgment, or are even ineffective for fine-tuned models. To alleviate these problems, this paper proposes a harmless model ownership verification method for personalized LVMs by decoupling similar common features. In general, our method consists of three main stages. In the first stage, we create shadow models that retain common features of the victim model while disrupting dataset-specific features. We represent the dataset-specific features of the victim model by computing the output differences between the shadow and victim models, without altering the victim model or its training process. After that, a meta-classifier is trained to identify stolen models by determining whether suspicious models contain the dataset-specific features of the victim. In the third stage, we conduct model ownership verification by hypothesis test to mitigate randomness and enhance robustness. Extensive experiments on benchmark datasets verify the effectiveness of the proposed method in detecting different types of model stealing simultaneously. Our codes are available at https://github.com/zlh-thu/Holmes.
}

\keywords{Ownership Verification, Model Fingerprinting, Model Stealing, Deep IP Protection, AI Security}

%%\pacs[JEL Classification]{D8, H51}

%%\pacs[MSC Classification]{35A01, 65L10, 65L12, 65L20, 65L70}

\maketitle
% \clearpage
\section{Introduction}\label{sec:intro}

Pre-trained large vision models (LVMs) are widely used across various applications \cite{VaswaniSPUJGKP17, radford2021learning,gu2024rwkv}. The dominant development paradigm for specific downstream tasks involves personalizing LVMs via fine-tuning on local data. This approach results in a well-performing task-oriented model at a small training cost since the foundation model has already learned a high-quality representation from a large amount of data \cite{wei2023improving,wang2023image,li2025rethinking}. Generally, these personalized models have significant commercial value, and their fine-tuning process involves private and valuable data, making them critical intellectual property for the owners.

\begin{figure}[!t]
    %\vspace{-2.0em}
    \centering    \includegraphics[width=0.45\textwidth]{./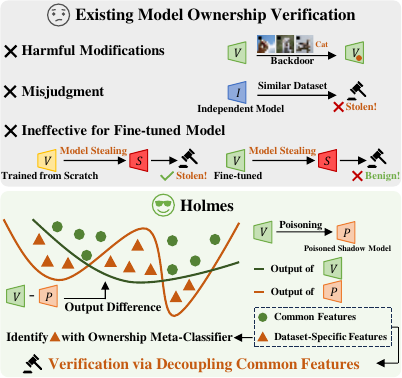}
    %\vspace{-0.3em}
\caption{Limitations of existing model ownership verification and our solution via decoupling common features. Existing model ownership verification methods suffer from three main limitations: (1) Introducing harmful modifications ($e.g.$, backdoor) that compromise model reliability; (2) Vulnerability to misjudgments when models share common features; (3) Ineffectiveness for fine-tuned models since most defense methods are primarily designed for models trained from scratch. We propose \texttt{Holmes} to systematically alleviate these limitations via feature decoupling: (1) Ensuring harmlessness through non-invasive verification that makes use of shadow models, avoiding modifications to the original victim model; (2) Achieving robustness through an ownership meta-classifier to identify dataset-specific features; (3) Ensuring reliability for fine-tuned models by leveraging the learned inherent dataset-specific features, eliminating the need for external feature embedding or artificial modifications during fine-tuning.
}
\label{fig:intro}
\vspace{-1.0em}
\end{figure}

% model parameters 

However, recent studies revealed that the adversary could `steal' (personalized) models even without access to their training samples or model parameters \cite{tramer2016stealing, chandrasekaran2020exploring, carlini2024stealing} via model stealing attacks. For example, the adversary might use the victim model to label an unlabeled dataset or apply knowledge distillation to reproduce the victim model \cite{gou2021knowledge, li2023curriculum,sun2024logit}. Such attacks pose a severe threat to model owners, as these attacks require significantly fewer resources than directly conducting personalization ($e.g.$, fewer computational resources or less training data).

Currently, model ownership verification (MOV) is an important audit strategy to protect the model copyright. It aims to identify whether a suspicious third-party model is stolen from the victim. Existing MOV methods usually fall into two categories: model watermarking and model fingerprinting. Specifically, the model watermarking approaches \cite{ya2023towards,zhu2024reliable,wang2025sleepermark} mostly rely on embedding defender-specified misclassification behaviors, especially via backdoor attacks \cite{li2022backdoor, gao2024backdoor, liang2025vl}, into the victim model. However, these methods inevitably introduce potential vulnerabilities, compromising model reliability and limiting practical deployment. Most recently, to address this limitation, MOVE \cite{li2025move} proposed a harmless watermarking scheme that avoids embedding misclassification behaviors. Specifically, MOVE defined harmlessness as the watermarked model should have similar prediction behaviors to the one without any watermark ($i.e.$, benign model trained with the benign dataset). Accordingly, instead of leveraging misclassification behaviors, MOVE modified a subset of training samples using style transfer without manipulating their labels to embed external features as dataset-specific features into the training set. By identifying whether the model has learned the external features, MOVE can provide effective model ownership verification. In particular, since the modifications to the training set do not alter the labels of the affected data, this method remains harmless. However, it is primarily designed for models trained from scratch. As we will reveal in Section \ref{sec:limitation}, MOVE is ineffective for fine-tuned models because the required external features cannot be adequately learned during fine-tuning due to interference from extensive `robust features' learned by the foundation model.

In contrast, model fingerprinting methods \cite{ peng2022fingerprinting, yang2022metafinger, shao2025sok} verify ownership by detecting whether the suspicious models retain the inherent features naturally learned by the victim model from its training data. For example, dataset inference \cite{maini2021dataset} uses the decision boundary of the victim model as a representation of its learned features. Unlike watermarking, fingerprinting does not modify the model's training process, therefore naturally ensuring harmlessness. However, these methods are primarily designed for simple model stealing attacks ($e.g.$, direct copying or fine-tuning), whereas exhibiting limited robustness against advanced stealing methods \cite{maini2021dataset, li2025move}. Besides, as we will show in Section \ref{sec:limitation}, relying solely on inherent features may lead to misjudgment when the victim and suspicious models are trained on similar datasets, as they may learn similar `common features'. These limitations raise an intriguing question: \emph{Can we design effective and harmless MOV methods for personalized LVMs?}

Fortunately, the answer to the above question is positive, although its solution is non-trivial. Motivated by these insights, we aim to reduce misjudgments in model fingerprinting by decoupling common features and isolating dataset-specific characteristics from the model's inherent representations. We then exploit these decoupled inherent yet dataset-specific features for fingerprinting: the inherent component ensures effective learnability, while the dataset-specific component reduces the likelihood of misjudgment. In contrast to prior fingerprinting approaches that mainly optimize the representation of inherent features~\cite{peng2022fingerprinting, yang2022metafinger, shao2025sok}, this paper advances a new fingerprinting paradigm that explicitly suppresses the interference of shared common features, which are likely a primary source of misjudgments. In general, our method (dubbed `\texttt{Holmes}') consists of three main stages, including \textbf{(1)} creating shadow models, \textbf{(2)} training ownership meta-classifier, and \textbf{(3)} ownership verification with hypothesis test. Specifically, in the first stage, we create two shadow models, including one poisoned shadow model and one benign shadow model, to decouple the common features and distinguish the learned dataset-specific features of the victim model. In general, the poisoned shadow model retains common features learned by the victim model, while its dataset-specific features are disrupted. As such, we can indirectly capture the dataset-specific features while decoupling the common features via the output difference between these two models. To achieve it, we poison the most well-learned personalization samples (with the lowest loss values) to disrupt dataset-specific features via the label-inconsistent backdoor attack ($e.g.$, BadNets \cite{gu2019badnets}). Intuitively, we choose these samples since they correspond to the model's propensity to memorize dataset-specific patterns (instead of common features) \cite{ilyas2019adversarial}. We exploit the (label-inconsistent) backdoor attack since it allows a high performance of poisoned models in predicting benign testing samples, thus preserving common features. The benign shadow model is introduced for reference to further mitigate false positives from independent models with related dataset-specific features. In general, this model exhibits dataset-specific features that are similar but distinct from those of the victim model. Accordingly, the output difference between the benign and poisoned shadow models reveals dataset-specific features that are distinct from those of the victim model. To achieve this model, we fine-tune the same foundation model on personalization samples excluding those that were well-learned by the victim model. In the second stage, we train a binary ownership meta-classifier for identifying dataset-specific features of the victim model, based on two output differences: \textbf{(1)} the output differences between the victim model and the poisoned shadow model (for capturing the victim model's dataset-specific features) and \textbf{(2)} the output differences between the benign shadow model and the poisoned shadow model (for capturing features unrelated to the victim model's dataset-specific features). In the third stage, we conduct model ownership verification via hypothesis test to mitigate randomness and further enhance robustness. Throughout all stages of our workflow, neither the (released) victim model nor its training process is modified. The proposed method is therefore intrinsically harmless.

The main contribution of this work is four-fold: \textbf{(1)} We revisit the ownership verification methods and reveal the limitations of existing methods for personalized LVMs. \textbf{(2)} We propose a new fingerprinting paradigm that suppresses the interference of shared common features, addressing a primary cause of misjudgments in ownership verification. \textbf{(3)} We develop an effective and intrinsically harmless model fingerprinting method based on feature decoupling, instantiated through the construction of shadow models and hypothesis-test-based verification. \textbf{(4)} We validate the effectiveness of our method on benchmark datasets for simultaneously detecting multiple types of model stealing and further discuss its potential extensions to generative tasks such as image captioning.

%%%%%%%%%%

\section{Related Work}
\label{sec:related-work}
\subsection{Large Vision Model}

By learning high-quality representations from extensive datasets, large pre-trained vision models (LVMs) have demonstrated remarkable capabilities in comprehending image content and extracting rich semantic information \cite{ShenLTBRCYK22,wang2023image,gu2024rwkv}. These models excel across a wide range of computer vision tasks, including image classification \cite{mauricio2023comparing,gao2024clip, al2025analysing}, transfer learning \cite{wei2023improving,wang2023image,gu2024rwkv}, object detection \cite{kirillov2023segment, cai2024crowd, wu2025weakly}, and semantic segmentation \cite{wang2025reclip++, zhu2025weakclip, xie2025clims++}. Among the most notable LVMs, CLIP (Contrastive Language-Image Pre-Training) \cite{radford2021learning} stands out as a cornerstone in the field. CLIP leverages contrastive learning to align textual and visual information in a shared embedding space, enabling it to perform image classification \cite{mauricio2023comparing,gao2024clip, al2025analysing} and transfer learning \cite{rasheed2023fine} with exceptional accuracy. Pre-trained on a massive dataset of image-text pairs, CLIP is widely adapted as a foundation model for various downstream tasks \cite{mokady2021clipcap, lin2024open, sun2025fairness}. Other prominent LVMs include BEiT (Bidirectional Encoder Representation from Image Transformers) \cite{bao2021beit}, which also achieves outstanding performance in various vision tasks \cite{wang2023image}.

% start from here

Currently, the most prevalent application paradigm for LVMs involves personalization on downstream tasks via fine-tuning \cite{wei2023improving,wang2023image, gu2024rwkv}. This process typically begins with a pre-trained foundation model, which has already learned high-quality generalizable features from a large dataset. The foundation model is then adapted to specific tasks by fine-tuning on task-specific datasets. For example, in medical imaging, a pre-trained LVM can be fine-tuned on a dataset of X-ray images to significantly improve disease diagnosis accuracy \cite{lin2024open, sun2025fairness}. Fine-tuning not only leverages the pre-trained model's robust feature extraction capabilities but also significantly reduces computational costs and training time compared to training models from scratch. However, as many pre-trained LVMs are non-public and task-specified datasets may involve highly private samples, the personalized models are a valuable intellectual property for the owners.

\subsection{Model Stealing}
\label{sec:model-stealing}
Model stealing refers to the unauthorized replication of a victim model's functionality, thereby compromising the model owner's intellectual property. Depending on how the adversary interacts with the victim model, existing attacks can be categorized into the following types:

\vspace{0.5em}
\noindent\textbf{Parameter-Based Stealing.} This category involves direct use of the victim model's parameters, requiring full access to the model through either public release or unauthorized means. The adversary may directly copy the model or fine-tune it to produce a stolen model \cite{maini2021dataset,li2025move}.

\vspace{0.5em}
\noindent\textbf{Distillation-Based Stealing.} In this setting, the adversary does not directly access or modify the victim model's parameters. Instead, the victim model is treated as a teacher model, and a surrogate (student) model is trained to mimic its functionality via knowledge distillation. These attacks are further divided based on the adversary's data access: \textbf{(1) Training Data-Assisted Attacks}: Given access to the training dataset, the adversary can employ knowledge distillation \cite{li2023curriculum} to train a surrogate model that mimics the victim's input-output behavior. \textbf{(2) Query-Only Attacks}: Without access to any training samples, the adversary can still conduct model stealing via data-free distillation \cite{yu2023data, liu2024small, tran2024nayer}, hard distillation attack \cite{papernot2017practical,jagielski2020high}, and soft distillation attack \cite{orekondy2019knockoff, cai2025llava}. Specifically, the data-free distillation method employs an additional generative model to generate data samples for the distillation process. Hard distillation attacks utilize the victim model to label substitute (unlabeled) samples, which are then used to train their substitute model. Soft distillation attacks typically obtain a substitute model by minimizing the distance between its predicted logits of the substitute samples and those generated by the victim model. In particular, some approaches, such as the Distilled Fine-Tuning (DFT) \cite{cai2025llava}, extend beyond using logits and exploit intermediate representations (\eg, vision tokens). While this enables more effective knowledge transfer, it comes at the cost of requiring access to the model's internal features.

\subsection{Defense against Model Stealing}
\label{sec:background_defenses}

Existing defenses against model stealing attacks can be broadly categorized into two main types: active defenses and verification-based defenses.

\vspace{0.3em}
\noindent \textbf{Active Defenses.} Active defenses aim to increase the cost or reduce the efficacy of model stealing attacks by increasing the number of queries or reducing the accuracy of the stolen model. Common strategies include introducing perturbations to model outputs or restricting the output form of the victim model. For examples, defenders may apply output probability rounding \cite{tramer2016stealing}, add noise to logit outputs \cite{lee2018defending, zhang2023apmsa}, or only return the most confident label instead of the whole output vector \cite{orekondy2019knockoff}. These methods disrupt the adversary's ability to accurately mimic the behavior of the victim model, thereby defending against the model stealing attacks. However, these defenses may significantly degrade the performance of the victim models and may even be ineffective against adaptive stealing attacks \cite{jia2021entangled,maini2021dataset,li2025move}.

\vspace{0.3em}
\noindent \textbf{Verification-Based Defenses.} In general, model ownership verification aims to verify whether a suspicious model is stolen from the victim model. Existing verification-based methods can be divided into two main categories, including \emph{model watermarking} and \emph{model fingerprinting}.

\vspace{0.3em}
\noindent \textbf{(1) Model Watermarking.} Most model watermarking methods conduct verification by modifying the victim model to induce defender-specified misclassification behaviors \cite{jia2021entangled, wang2023free, zhu2024reliable}. For example, backdoor-based verification embeds specific trigger-label pairs into the training data, which lie outside the distribution of the original data. During verification, a model is considered stolen if it misclassifies the triggered samples to the target label, similar to the victim model's behavior. However, these methods introduce latent vulnerabilities, such as hidden backdoors, which can be exploited maliciously, compromising the model's functionality and reliability. Additionally, they may be vulnerable to backdoor removal techniques \cite{wang2019neural,chen2024anti,zhu2024neuralsanitizer}, which can erase the watermarks \cite{li2025move}.

To the best of our knowledge, MOVE \cite{li2025move} represents the state-of-the-art (SOTA) in harmless model watermarking. Unlike backdoor-based methods, MOVE embeds external features as dataset-specific features into the training set by modifying a subset of training samples using style transfer. Thus, MOVE can identify the stolen model by verifying whether the model has learned the external features. Crucially, these dataset modifications do not alter the labels of the affected data, ensuring the method's harmlessness. Specifically, considering a $K$-classification problem, the defender pre-defines a style transformer $T: \mathcal{X} \times \mathcal{X} \rightarrow \mathcal{X}$, where $\bm{x}_s$ is a defender-specified \emph{style image}. The defender then randomly selects $\gamma \%$ samples ($i.e.$, $\mathcal{D}_s$) from the training set $\mathcal{D}$ to generate their transformed version $\mathcal{D}_t = \{ (\bm{x}', y) |\bm{x}' = T(\bm{x}, \bm{x}_s), (\bm{x}, y) \in \mathcal{D}_s \}$. The victim model $V$ is trained with $\mathcal{D}_t \cup \mathcal{D}_b$, where $\mathcal{D}_b \triangleq \mathcal{D} \backslash \mathcal{D}_s$. Consequently, the output difference between a suspicious model $S$, $i.e.,$ $S(\bm{x}') - S(\bm{x})$, serves as an indicator of potential model theft. However, MOVE is primarily designed for scenarios where the victim model is trained from scratch. As we reveal in Section \ref{sec:limitation}, MOVE becomes ineffective for personalized models, as these models struggle to learn external, dataset-specific features during the fine-tuning process. This is primarily because the pre-existing knowledge in the foundation model inhibits the learning of new external features.

\vspace{0.3em}
\noindent \textbf{(2) Model Fingerprinting.} Unlike model watermarking methods, model fingerprinting does not require any modifications to the victim model's training process, ensuring a harmless verification process. Most existing methods \cite{maini2021dataset, peng2022fingerprinting, yang2022metafinger} verify ownership by leveraging the inherent features learned by the victim model from its original training data. For example, dataset inference \cite{maini2021dataset} is a classical model fingerprinting method that uses the decision boundary of the victim model as a representation of the learned features. Specifically, for each sample $(\bm{x}, y)$ in the training set, dataset inference defines the minimum distance to each class ($i.e.$, $\bm{\delta}=(\bm{\delta}_1, \cdots, \bm{\delta}_K)$) as the decision boundary of the victim model $V$. The minimum distance $\bm{\delta}_t$ to each class $t$ is generated via: 
\begin{equation}
    \min_{\bm{\delta}_t} d(\bm{x}, \bm{x}+\bm{\delta}_t), s.t., V(\bm{x}+\bm{\delta}_t) = t,
\end{equation}
where $d(\cdot)$ is a distance metric ($e.g.$, $\ell^\infty$ norm). The defender then selects samples from private and public datasets, using $\bm{\delta}$ as feature embeddings to train a binary meta-classifier $C$, where $C(\bm{\delta}) \in [0,1]$ estimates the probability of a sample belonging to the private dataset. To identify stolen models, the defender creates equal-sized sample vectors from private and public samples and conducts a hypothesis test based on the trained $C$. If the confidence scores of private samples are significantly higher than those of public samples, the suspicious model is treated as containing the features from private samples, thereby being identified as a stolen model.

Follow-up works \cite{peng2022fingerprinting,yang2022metafinger} aim to improve the characterization of learned inherent features. For example, Peng \emph{et al.} introduced universal adversarial perturbations (UAPs) \cite{peng2022fingerprinting} to capture the decision boundary of the victim model, which serves as a representation of the learned inherent features. Metafinger \cite{yang2022metafinger} leverages meta-training to capture the decision regions of the victim model. Metafinger generates multiple shadow models as meta-data and optimizes specific images through meta-training, ensuring that only models derived from the victim model can correctly classify them. However, model fingerprinting methods are primarily designed for simple model stealing attacks, $e.g.$, direct copying or fine-tuning. These methods exhibit limited robustness against advanced model stealing methods \cite{li2022defending,li2025move}, $e.g.$, knowledge distillation \cite{li2023curriculum,tran2024nayer}, making them insufficient for protecting models in complex adversarial scenarios. Additionally, as we reveal in Section \ref{sec:limitation}, verification based solely on inherent features may easily lead to misjudgments, especially when the training sets of the suspicious model and the victim model have similar distributions \cite{li2022defending,li2025move}. This is primarily because different models may learn similar common features when their training sets share certain distributional similarities.

In conclusion, existing defenses still have significant limitations. How to design effective and harmless ownership verification methods for fine-tuned personalized LVMs remains an important open question and warrants further exploration.

\section{Revisiting Harmless Model Ownership Verification}
\label{sec:limitation}
This section revisits harmless model ownership verification in the context of fine-tuned personalized models, covering both \emph{harmless model watermarking} and \emph{model fingerprinting}. We take on two representative approaches as study cases: MOVE \cite{li2025move} for harmless watermarking and dataset inference \cite{maini2021dataset} for model fingerprinting. Our analysis specifically focuses on their limitations when applied to fine-tuned personalized models.

\subsection{Preliminaries}

\vspace{0.3em}
\noindent \textbf{Problem Formulation.} This paper focuses on defending against model stealing in image classification tasks via model ownership verification. Specifically, given a suspicious model $S$, the defender intends to determine whether it has been stolen from the victim model $V$. Let $\mathcal{D} = \{ (\bm{x}_i, y_i) \}_{i=1}^{N}$ be the training set used to fine-tune the pre-trained foundation model $F$. The victim model $V$ is obtained by:
\begin{equation}
    V = \arg \min_{\bm{\theta}} \sum_{(\bm{x}, y) \in \mathcal{D}} \mathcal{L}(F_{\bm{\theta}}(\bm{x}), y),
\end{equation}
where $\mathcal{L}(\cdot)$ is the loss function ($i.e.$, cross-entropy), and $\bm{\theta}$ represents the parameters of the foundation model $F$. This fine-tuning process adapts $F$ to the specific downstream task defined by $\mathcal{D}$, rather than trains the model from scratch.

For a model ownership verification method to be effective in practice, it must satisfy two key requirements: \emph{effectiveness} and \emph{harmlessness}. Effectiveness means that the defense must accurately determine whether the suspicious model has been stolen from the victim, regardless of the model stealing technique employed. At the same time, it should not misclassify independently trained or fine-tuned benign models as stolen models. Harmlessness ensures that the defense does not introduce additional security risks, such as backdoors. Furthermore, if the method modifies the training set, harmlessness implies that models trained or fine-tuned with the modified dataset should exhibit behavior similar to those trained with the original dataset. In particular, this paper adopts a stricter harmlessness criterion, requiring that the defense must not modify the victim model's parameters in any form, including direct alteration, additional training, or fine-tuning.

\vspace{0.3em}
\noindent \textbf{Threat Model.} In this paper, we consider the black-box setting for model ownership verification. Specifically, we assume that the defenders can only query and obtain the predicted probabilities (logit outputs) from the suspicious model, with no access to model source files or parameters, intermediate computational results ($e.g.$, gradients), or any information about the model stealing process.

\subsection{Revisiting Harmless Model Watermarking}
As illustrated in Section \ref{sec:background_defenses}, MOVE \cite{li2025move} relies on a latent assumption that the model can sufficiently learn the embedded external dataset-specific features via modifying a few training samples with style transfer. However, since MOVE is designed for victim models trained from scratch, this assumption may not hold when fine-tuning is applied. Therefore, this method may fail to provide effective ownership verification. In this section, we demonstrate this limitation.

\vspace{0.3em}
\noindent \textbf{Settings.} We evaluate MOVE's effectiveness on CIFAR-10 \cite{krizhevsky2009learning} using ResNet18 \cite{he2016deep} under three settings: fine-tuning, unlearning-based fine-tuning, and training from scratch. Specifically, let $\mathcal{D} = \{ (\bm{x}_i, y_i) \}_{i=1}^{N}$ denote the benign training set, $\bm{x}_s$ be a defender-specified \emph{style image}, and $T: \mathcal{X} \times \mathcal{X} \rightarrow \mathcal{X}$ be a pre-trained style transformer. Following the setting of MOVE \cite{li2025move}, we randomly select $10\%$ samples ($i.e.$, $\mathcal{D}_s$) from $\mathcal{D}$ to generate their transformed versions $\mathcal{D}_t = \{ (\bm{x}', y) |\bm{x}' = T(\bm{x}, \bm{x}_s), (\bm{x}, y) \in \mathcal{D}_s \}$. Under the original assumption of MOVE, the victim model should sufficiently learn the external features from $\mathcal{D}_t$ by training or fine-tuning on $\mathcal{D}_b \cup \mathcal{D}_t$. To evaluate this assumption under fine-tuning process, three ResNet18 models are then prepared: \textbf{(1)} ResNet18-FT: Fine-tuned from a pre-trained ResNet18 on $\mathcal{D}_b \cup \mathcal{D}_t$, where the pre-trained ResNet18 is trained with ImageNet \cite{deng2009imagenet} dataset and $\mathcal{D}_b \triangleq \mathcal{D} \backslash \mathcal{D}_t$. \textbf{(2)} ResNet18-UL: First unlearned on randomly relabeled $\mathcal{D}$ for 10 epochs before fine-tuning on $\mathcal{D}_b \cup \mathcal{D}_t$. \textbf{(3)} ResNet18-S: Trained from scratch on $\mathcal{D}_b \cup \mathcal{D}_t$.

We evaluate the effectiveness of MOVE on ResNet18-FT, ResNet18-UL, and ResNet18-S under the simplest form of model stealing, where the victim model is directly copied and fine-tuned to create the stolen models. Following the setting in MOVE \cite{li2025move}, we adopt the p-value as the evaluation metric, where \emph{the smaller the p-value, the more confident that MOVE believes the model stealing happened}. We mark failed verification cases ($i.e.$, p-value $>0.01$ for stolen models) in red. We also test the accuracy of each model on the transformed dataset $\mathcal{D}_t$ and the unmodified CIFAR-10 test dataset to assess whether they have learned the external features in $\mathcal{D}_t$.

\begin{table}[t]
\centering
\vspace{-1.8em}
\caption{Failure of MOVE on fine-tuned models.}
%\vspace{-0.2em}
\setlength{\tabcolsep}{1.4pt}
\begin{tabular}{c|ccc}
\toprule  
Model Stealing $\downarrow$ & ResNet18-FT&ResNet18-UL& ResNet18-S\\ \hline
Direct-copy  &  \red{0.55} &$10^{-4}$& $10^{-15}$ \\
Fine-tuning  & \red{0.48} &$10^{-3}$& $10^{-10}$  \\
\bottomrule
\end{tabular}
%\vspace{-0.5em}
\label{table:move_p_val}
\vspace{-3.0em}
\end{table}

\begin{table}[t]
\centering
\caption{Accuracy on $\mathcal{D}_t$ and the benign set.}
\setlength{\tabcolsep}{3pt}
%\vspace{-0.6em}
\begin{tabular}{c|ccc}
\toprule  
Test Set $\downarrow$ & ResNet18-FT &ResNet18-UL& ResNet18-S \\ \hline
$\mathcal{D}_t$ & 17.54\%& 42.45\% & 88.17\% \\
Benign Set  & 93.16\%& 87.50\% & 92.07\%\\
\bottomrule
\end{tabular}
%\vspace{-0.8em}
\label{table:move_acc}
\vspace{-1.8em}
\end{table}

\vspace{0.3em}
\noindent \textbf{Results.} As shown in Table \ref{table:move_p_val}, MOVE successfully detects model stealing for ResNet18-UL and ResNet18-S, while it fails to verify ownership for the fine-tuned model ResNet18-FT. This performance gap correlates with their learning capabilities of the external features. As shown in Table \ref{table:move_acc}, ResNet18-FT achieves only 17.54\% accuracy on the transformed dataset $\mathcal{D}_t$, compared to 42.45\% for ResNet18-UL and 88.17\% for ResNet18-S. These findings reveal two key insights: First, fine-tuning process lacks sufficient capacity for learning external dataset-specific features, likely due to interference from the pre-trained model's existing knowledge. Second, the unlearning process in ResNet18-UL appears to mitigate this interference, enabling better acquisition of external features. The results collectively demonstrate that MOVE's verification mechanism, which relies on learning external features as dataset-specific features, is ineffective for fine-tuning scenarios due to interference from learned knowledge in the pre-trained foundation model.

\begin{table}[!t]
\centering
\vspace{-1.9em}
\caption{Misjudgments of DI on fine-tuned models.}
\setlength{\tabcolsep}{1.3pt}
%\vspace{-0.7em}
\begin{tabular}{c|ccc}
\toprule  
 & ResNet18-FT-$\mathcal{D}_r$ & VGG16-FT-$\mathcal{D}_l$ & VGG16-FT-$\mathcal{D}_l'$ \\ \hline
Accuracy & 92.45\% &92.94\%	&63.53\%\\
p-value  & \red{$10^{-48}$}	& \red{$10^{-28}$} &\red{$10^{-21}$}\\
\bottomrule
\end{tabular}
\label{table:dataset_inference_mis}
\vspace{-2em}
\end{table}

\subsection{Revisiting Model Fingerprinting}
As discussed in Section \ref{sec:background_defenses}, model fingerprinting methods are primarily designed for simple model stealing attacks and show limited robustness against advanced stealing methods \cite{maini2021dataset, li2025move}, such as knowledge distillation \cite{li2023curriculum,sun2024logit}. Experiment results demonstrating these limitations are presented in Section \ref{sec:main_experiments}. In this section, we focus specifically on the false positive problem in model fingerprinting, where benign models may be incorrectly identified as stolen.

Specifically, we take dataset inference (DI) \cite{maini2021dataset}, a representative fingerprinting method, as the study case. As demonstrated in previous studies \cite{li2022defending, li2025move}, when verifying the ownership of models trained from scratch, DI easily leads to misjudgment when the training set of the suspicious models has a similar distribution to the victim model training set. As illustrated in the following experiments, this limitation also applies to the fine-tuned victim models.

\vspace{0.3em}
\noindent \textbf{Settings.} We conduct experiments on the CIFAR-10 \cite{krizhevsky2009learning} dataset with VGG16 \cite{simonyan2014very} and ResNet18 \cite{he2016deep}. Specifically, we randomly split the original training set $\mathcal{D}$ into two disjoint subsets, $\mathcal{D}_l$ and $\mathcal{D}_r$. We use the pre-trained VGG16 and ResNet18 models, both trained on the ImageNet \cite{deng2009imagenet}, as the foundation models. Three different models are then prepared: (1) VGG16-FT-$\mathcal{D}_l$: Fine-tuned from the pre-trained VGG16 on $\mathcal{D}_l$. (2) ResNet18-FT-$\mathcal{D}_r$: Fine-tuned from the pre-trained ResNet18 on $\mathcal{D}_r$. (3) VGG16-FT-$\mathcal{D}_l'$: Fine-tuned from the pre-trained VGG16 on a noisy dataset $\mathcal{D}_l' \triangleq \{(\bm{x}', y)|\bm{x}' = \bm{x} + \mathcal{N}(0, 16), (\bm{x}, y) \in \mathcal{D}_l\}$.

In the model ownership verification process, we verify whether VGG16-FT-$\mathcal{D}_l$ and VGG16-FT-$\mathcal{D}_l'$ are stolen from ResNet18-FT-$\mathcal{D}_r$, and whether ResNet18-FT-$\mathcal{D}_r$ is stolen from VGG16-FT-$\mathcal{D}_l$. Following the setting of dataset inference \cite{maini2021dataset}, we adopt the p-value as the evaluation metric, where \emph{the smaller the p-value, the more confident that dataset inference believes the model stealing happened}. It is worth noting that $\mathcal{D}_l$, $\mathcal{D}_r$, and $\mathcal{D}_l'$ are highly similar in distribution but not identical, and the three models are not involved in any model stealing between each other.

\vspace{0.3em}
\noindent \textbf{Results.} As shown in Table \ref{table:dataset_inference_mis}, DI misjudges all cases. Although ResNet18-FT-$\mathcal{D}_r$, VGG16-FT-$\mathcal{D}_l$, and VGG16-FT-$\mathcal{D}_l'$ are fine-tuned on different datasets and should not be considered as stolen, the method incorrectly claims ownership between them. These misjudgments are probably because the fine-tuned models learn similar common features from $\mathcal{D}_r$, $\mathcal{D}_l$, and $\mathcal{D}_l'$.

\section{The Proposed Method}
\label{sec:method}

Based on the understandings in Section \ref{sec:limitation}, where we revealed that existing harmless MOV methods either fail for fine-tuned models due to interference from pre-trained knowledge or suffer from misjudgments caused by shared common features, we propose \texttt{Holmes}, which exploits inherent yet dataset-specific features for verification via decoupling the common features. As shown in Figure \ref{fig:pipeline}, our method consists of three main stages, including \textbf{(1)} creating shadow models, \textbf{(2)} training ownership meta-classifier, and \textbf{(3)} ownership verification with hypothesis test. The pseudocode of \texttt{Holmes} is in Appendix \ref{appendix:pseudocode}. The technical details are in the following subsections.

\begin{figure*}[!t]
    %\vspace{-1.0em}
    \centering    \includegraphics[width=0.95\textwidth]{./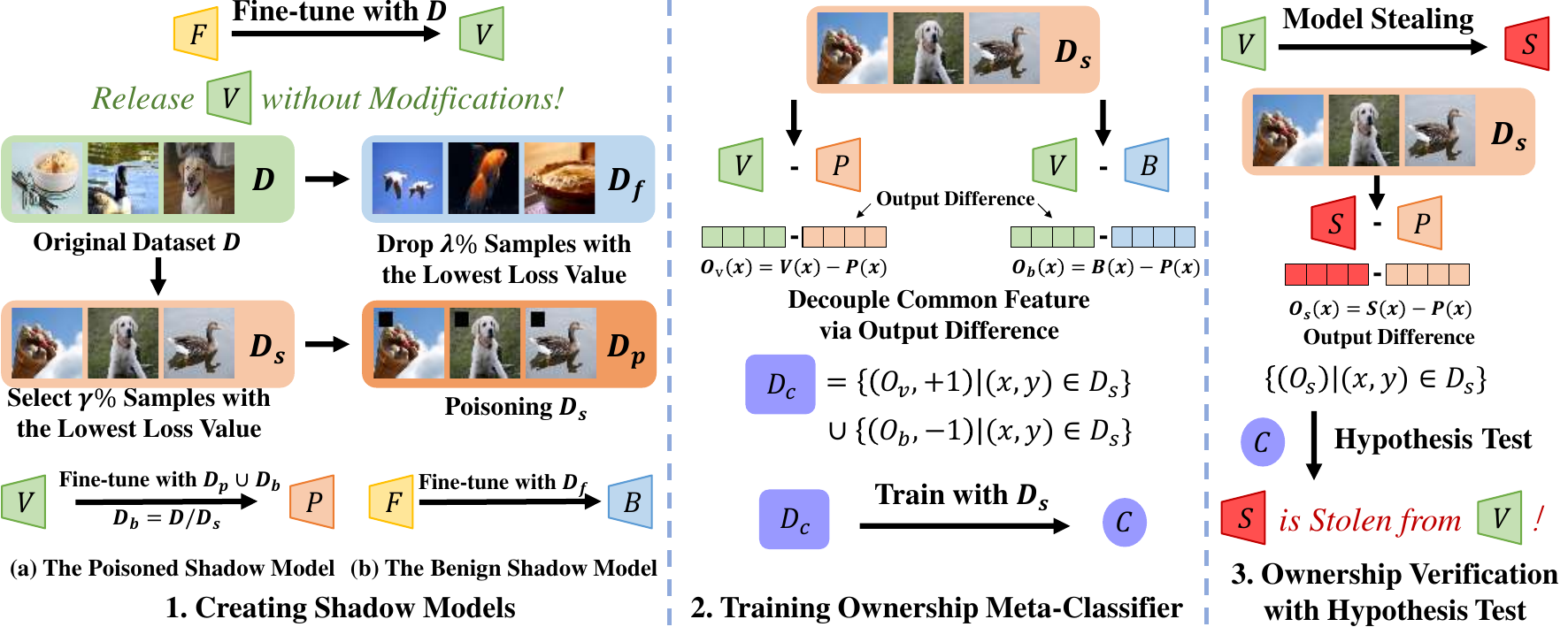}
    \vspace{0.8em}
    \caption{The main pipeline of \texttt{Holmes}. \textbf{Step 1. Creating Shadow Models:} This step involves generating two shadow models to represent dataset-specific features. (a) The poisoned shadow model disrupts dataset-specific features while preserving common features through backdoor attack. (b) The benign shadow model introduces distinct dataset-specific features by fine-tuning on a filtered dataset with similar common features. \textbf{Step 2. Training Ownership Meta-Classifier:} A meta-classifier is trained using the output differences between the shadow models and the victim model to verify ownership. \textbf{Step 3. Ownership Verification with Hypothesis Test:} The final step involves conducting model ownership verification through a hypothesis test to mitigate randomness and further enhance robustness. The verification process requires no modifications to the victim model, ensuring the harmlessness of \texttt{Holmes}.}
    \label{fig:pipeline}
    \vspace{-0.5em}
\end{figure*}

\subsection{Creating Shadow Models}
\label{sec:creating_shadow_models}

In this section, we describe how to decouple the common features and distinguish the learned dataset-specific features of the victim model $V$ via two shadow models: a poisoned shadow model $P$ and a benign shadow model $B$.

\vspace{0.3em}
\noindent \textbf{The Poisoned Shadow Model.} As illustrated in Section \ref{sec:limitation}, similar common features may lead to misjudgment. If we can erase the impact of common features, the misjudgment may be avoided. However, defining these features is challenging due to the complex learning dynamics of LVMs. Luckily, we can decouple common features indirectly.

We hereby propose to create a poisoned shadow model $P$ that retains common features learned by the victim model, while its dataset-specific features are disrupted. Thus, we can indirectly capture the dataset-specific features while decoupling the common features via the output difference between $V$ and $P$. $P$ is constructed by poisoning the most well-learned personalization samples (with the lowest loss values) with the label-inconsistent backdoor attack ($e.g.$, BadNets \cite{gu2019badnets}). We choose these samples since they correspond to the model's propensity to memorize dataset-specific patterns (instead of common features) \cite{ilyas2019adversarial}. Specifically, for the training set $\mathcal{D} = \{(\bm{x}_i, y_i) \}_{i=1}^{N}$, we select $\gamma \%$ samples with the lowest loss values (dubbed $\mathcal{D}_s$), where the loss function is cross-entropy. We adopt the label-inconsistent backdoor attack to poison $\mathcal{D}_s$, as this type of attack allows a high performance of poisoned models in predicting benign samples, thus preserving common features. Specifically, the defender first defines the target class $y_t$, trigger pattern $\bm{t}$, and a poisoned image generator $G(\cdot)$. The poisoned subset $\mathcal{D}_p$ is created as $\mathcal{D}_p = \{ (\bm{x}', y_t) |\bm{x}' = G(\bm{x}; \bm{t}), (\bm{x}, y) \in \mathcal{D}_s \}$. $P$ is created via fine-tuning $V$ with $\mathcal{D}_p \cup \mathcal{D}_b$, where $\mathcal{D}_b \triangleq \mathcal{D} \backslash \mathcal{D}_s$. Since $P$ maintains $V$'s common features while disrupting its dataset-specific features, we can capture the dataset-specific features by the output difference between $V$ and $P$ on $\mathcal{D}_s$ indirectly.

\vspace{0.3em}
\noindent \textbf{The Benign Shadow Model.} In order to further mitigate false positives from independent models with related dataset-specific features, we introduce the benign shadow model $B$ for reference. Generally, $B$ exhibits dataset-specific features that are similar but distinct from those of the victim model. Thus, the output difference between $B$ and $P$ reveals dataset-specific features that are distinct from those of $V$. To construct $B$, we fine-tune the same foundation model $F$ on a filtered dataset $\mathcal{D}_f$ which excludes $\lambda \%$ of the $V$'s training samples with the lowest loss values. By dropping these well-learned samples, $\mathcal{D}_f$ contains data that encourages $B$ to develop distinct dataset-specific features while preserving common features shared with $V$. Thus, the output difference between $B$ and $P$ highlights dataset-specific features distinct from those revealed by the output difference between $V$ and $P$.

\subsection{Training Meta-Classifier}
\label{sec:method_clf}
As we described in Section \ref{sec:creating_shadow_models}, we utilize the output difference between $V$ and $P$ on $\mathcal{D}_s$ as the representation of $V$'s dataset-specific features. Since the output difference between $V$ and $P$ is a distribution rather than a single value, we need to train an additional binary meta-classifier $C: \mathbb{R}^{|\bm{w}|} \rightarrow \{-1, +1\}$ ($\bm{w}$ represents the parameters of the meta-classifier $C$) to determine whether the suspicious model contains the knowledge of the dataset-specific features.

The key insight is to contrast two types of output differences: \textbf{(1)} the output difference between $V$ and $P$, representing unique dataset-specific features of $V$; \textbf{(2)} the output difference between $B$ and $P$, denoting features distinct from $V$'s dataset-specific features, as $B$ shares common features with $V$ but learns different dataset-specific features from $\mathcal{D}_f$. By comparing these differences, $C$ learns to identify dataset-specific features of $V$ from irrelevant features. Formally, we define the output differences of a data sample $\bm{x}$ as:
\begin{equation}
\begin{aligned}
    \bm{O}_v(\bm{x}) =& V(\bm{x})-P(\bm{x}),\\
    \bm{O}_b(\bm{x}) =& B(\bm{x})-P(\bm{x}).
\end{aligned}
\end{equation}

The training set $\mathcal{D}_c$ of meta-classifier $C$ is constructed by labeling $\bm{O}_v(\bm{x})$ as positive samples (+1) and $\bm{O}_b(\bm{x})$ as negative samples (-1):
\begin{equation}
\begin{aligned}
   \mathcal{D}_c =&  \left\{\left(\bm{O}_v(\bm{x}), +1\right)| (\bm{x}, y) \in \mathcal{D}_s \right\} \cup \\
    & \left\{\left(\bm{O}_b(\bm{x}), -1\right)| (\bm{x}, y) \in \mathcal{D}_s \right\}.
\end{aligned}
\end{equation}

Finally, the meta-classifier $C$ is trained by minimizing the following objective: 
\begin{equation}
    \arg \min_{\bm{w}} \sum_{(\bm{O}, l) \in \mathcal{D}_c} \mathcal{L}(C_{\bm{w}}(\bm{O}), l),
\end{equation}
where $\mathcal{L}(\cdot)$ is the loss function ($i.e.$, cross-entropy), $\bm{w}$ represents the parameters of the meta-classifier $C$, and $l \in \{+1, -1\}$ denotes the label indicating whether the input corresponds to the dataset-specific features ($i.e.$, `$l=+1$') learned by victim model or unrelated features ($i.e.$, `$l=-1$').

\subsection{Ownership Verification with Hypothesis Test}
\label{sec:hypothesis_test}

In the verification process, given a suspicious model $S$, the defender can examine it via the meta-classifier output $C(\bm{O}_s(\bm{x}))$ for samples $(\bm{x}, y) \in \mathcal{D}_s$, where $\bm{O}_s(\bm{x}) = S(\bm{x})-P(\bm{x})$. If $C(\bm{O}_s(\bm{x})) = +1$, $S$ is considered stolen from the victim. However, this decision is susceptible to the randomness of the selected $\bm{x}$. To mitigate this issue, we adopt a hypothesis test-based method inspired by existing works \cite{maini2021dataset, li2022defending, li2025move}, as described below:

\begin{definition} Let $\bm{X}$ be a random variable representing samples from $\mathcal{D}_s$. Define $\mu_{S}$ and $\mu_{B}$ as the posterior probabilities of the events $C(\bm{O}_s(\bm{x})) = +1$ and $C(\bm{O}_s(\bm{x})) = -1$, respectively. Given the null hypothesis $H_0: \mu_{B} + \tau = \mu_{S}$ ($H_1: \mu_{B} + \tau < \mu_{S}$) where $\tau \in [0, 1]$, we claim that the suspicious model $S$ is stolen from the victim (with $\tau$-certainty) if and only if $H_0$ is rejected.
\end{definition}

In practice, we randomly sample $m$ samples from $\mathcal{D}_s$ to conduct the pair-wise T-test \cite{hogg2005introduction} and calculate its corresponding p-value. When the p-value is smaller than the pre-defined significance level $\alpha$, $H_0$ is rejected. Besides, we also calculate the \emph{confidence score} $\Delta \mu = \mu_{S} - \mu_{B}$ to represent the verification confidence. The larger the $\Delta \mu$, the more confident the verification.

\subsection{Robustness Analysis of \texttt{Holmes}'s Ownership Verification}
\label{sec:robustness_analysis}

Since perfect accuracy (100\%) is usually unachievable for the meta-classifier $C$, it raises concerns about the reliability of our ownership verification under classifier imperfections. In this section, we demonstrate that our verification framework remains statistically robust when $C$ exhibits realistic error rates. The success condition of verification is formalized as follows.

\begin{theorem}\label{thm1}
Let $\bm{X}$ be a random variable representing samples from $\mathcal{D}_s$. Assume that $\mu_{B} \triangleq \mathbb{P}(C(\bm{O}_s(\bm{X})) = -1) < \beta$. We claim that the verification process can reject the null hypothesis $H_0$ at the significance level $\alpha$ if the identification success rate $R$ of $C$ satisfies that
\begin{equation}
    R > \frac{2(m-1)(\beta+\tau)+t_{1-\alpha}^2 + \sqrt{\Delta} }{2(m-1+t_{1-\alpha}^2)},
\end{equation}
where $\Delta = t_{1-\alpha}^4 + 4t_{1-\alpha}^2(m-1)(\beta+\tau)(1-\beta - \tau)$, $t_{1-\alpha}$ is the $(1-\alpha)$-quantile of $t$-distribution with $(m-1)$ degrees of freedom, and m is the sample size of $\bm{X}$.
\end{theorem}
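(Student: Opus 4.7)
The plan is to reduce the T-test's rejection condition to an explicit quadratic inequality in $R$ and then solve. First, I would let $Y_i = \mathbf{1}\bigl[C(\bm{O}_s(\bm{x}_i)) = +1\bigr]$ for $i=1,\ldots,m$ denote the indicators produced by $C$ on the $m$ samples drawn from $\mathcal{D}_s$, so that each $Y_i$ is Bernoulli with mean $\mu_S = R$. The assumption $\mu_B < \beta$ makes the worst-case value of the null threshold $\mu_B + \tau$ equal to $\beta + \tau$, so rejecting $H_0$ amounts to rejecting the conservative one-sided null $\mathbb{E}[\bar{Y}] \le \beta + \tau$. The level-$\alpha$ rejection condition is then $T = (\bar{Y} - (\beta + \tau))\sqrt{m}/s > t_{1-\alpha}$, where $\bar{Y}$ and $s$ are the sample mean and Bessel-corrected sample standard deviation of $\{Y_i\}_{i=1}^{m}$.

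Because each $Y_i \in \{0,1\}$, a one-line calculation gives $\sum_{i=1}^m (Y_i-\bar{Y})^2 = m\bar{Y}(1-\bar{Y})$ and hence $s^2 = \frac{m}{m-1}\bar{Y}(1-\bar{Y})$. Substituting this into the test statistic and replacing $\bar{Y}$ by its expectation $R$, the rejection condition collapses to $(R - \beta - \tau)\sqrt{m-1} > t_{1-\alpha}\sqrt{R(1-R)}$. Any $R$ that achieves rejection must satisfy $R > \beta + \tau$, so both sides are positive and squaring is valid; rearranging yields
\begin{equation*}
(m-1+t_{1-\alpha}^2)\,R^2 - \bigl(2(m-1)(\beta+\tau) + t_{1-\alpha}^2\bigr)\,R + (m-1)(\beta+\tau)^2 > 0.
\end{equation*}

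This upward-opening quadratic is positive outside its two roots, so I would apply the quadratic formula and keep the larger root. In the discriminant $\bigl(2(m-1)(\beta+\tau)+t_{1-\alpha}^2\bigr)^2 - 4(m-1+t_{1-\alpha}^2)(m-1)(\beta+\tau)^2$, the $(m-1)^2(\beta+\tau)^2$ terms cancel and what remains simplifies cleanly to $\Delta = t_{1-\alpha}^4 + 4 t_{1-\alpha}^2 (m-1)(\beta+\tau)(1-\beta-\tau)$, matching the theorem; the larger root then coincides with the claimed bound. As a sanity check, plugging $R = \beta+\tau$ into the quadratic gives the value $-t_{1-\alpha}^2(\beta+\tau)(1-\beta-\tau) < 0$, so the larger root indeed exceeds $\beta+\tau$, which is consistent with the squaring step. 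The hardest step is the algebraic bookkeeping in simplifying the discriminant; the only conceptual subtlety is the idealization $\bar{Y} \approx R$ and $s^2 \approx \frac{m}{m-1}R(1-R)$, which a sharper statement of the theorem would justify via an explicit concentration bound over the $m$ samples.
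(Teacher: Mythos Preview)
Your proposal is correct and follows essentially the same route as the paper's own proof: reduce $H_0$ to the conservative null $\mu_S \le \beta+\tau$ via the assumption $\mu_B<\beta$, compute the Bernoulli sample variance $s^2=\tfrac{m}{m-1}R(1-R)$, substitute into the one-sided $t$-test rejection rule, square to obtain the same quadratic in $R$, and take the larger root after checking that it exceeds $\beta+\tau$. The only cosmetic difference is that the paper invokes the intermediate value theorem on $f(0),f(\beta+\tau),f(1)$ to locate the roots, whereas you do the equivalent sanity check by evaluating the quadratic at $R=\beta+\tau$.
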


In general, Theorem \ref{thm1} indicates two critical properties: \textbf{(1)} Our hypothesis test-based ownership verification can succeed if the identification success rate of $C$ is higher than a threshold (which is not necessarily 100\%). \textbf{(2)} The defender can claim the model ownership with limited queries to the suspicious model $S$ if the identification success rate is high enough. The detailed proof of Theorem \ref{thm1} is included in Appendix \ref{appendix:proof_theorem_1}.

%\subsection{Task Adaptation}

\section{Experiments}

\label{sec:main_experiments}
\begin{figure*}[!t]
%\vspace{-1.5em}
\centering
\subfloat[]{\includegraphics[width=0.15\textwidth]{./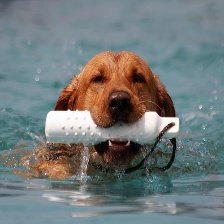}%
\label{fig:original}}
\hfil
\subfloat[]{\includegraphics[width=0.15\textwidth]{./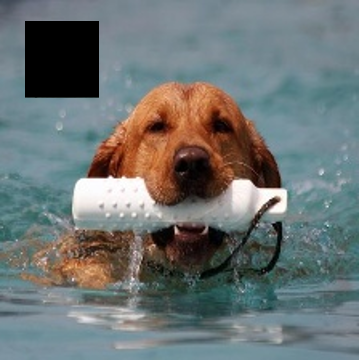}%
\label{fig:badnets}}
\hfil
\subfloat[]{\includegraphics[width=0.15\textwidth]{./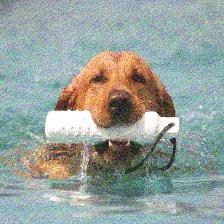}%
\label{fig:gm}}
\hfil
\subfloat[]{\includegraphics[width=0.15\textwidth]{./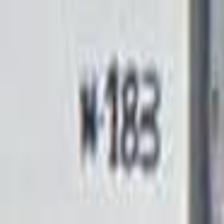}%
\label{fig:ewe}}
\hfil
\subfloat[]{\includegraphics[width=0.15\textwidth]{./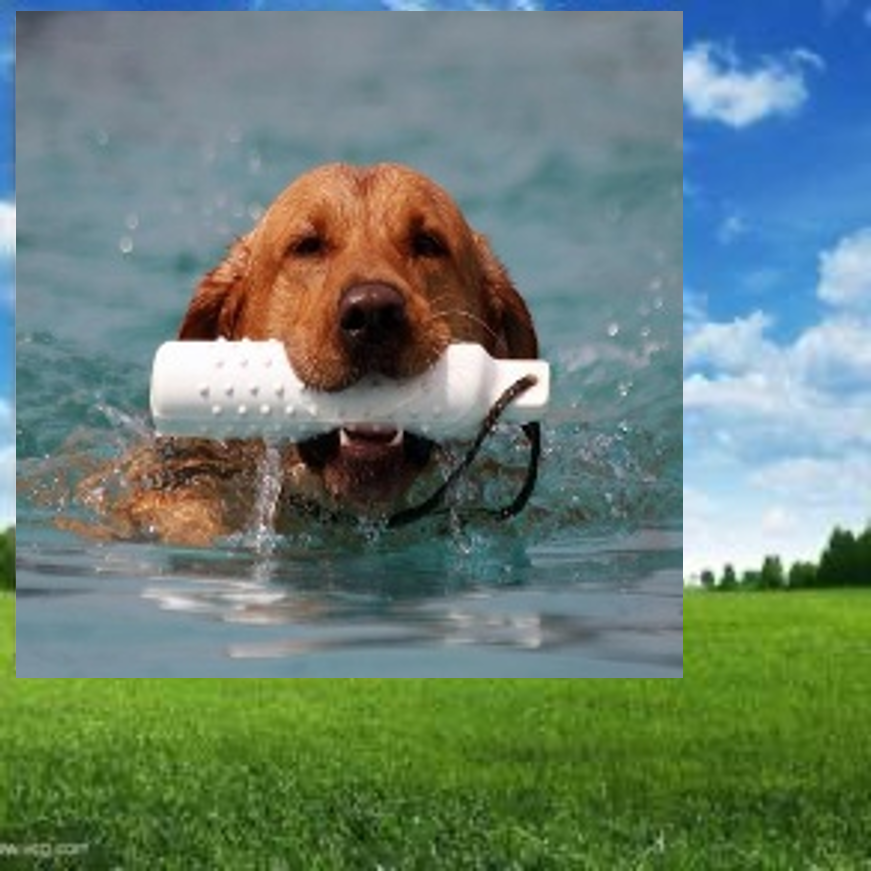}%
\label{fig:ptynet}}
\hfil
\\
\vspace{-0.8em}
\subfloat[]{\includegraphics[width=0.15\textwidth]{./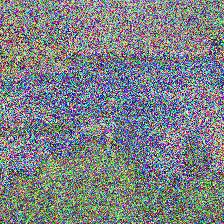}%
\label{fig:uae}}
\hfil
\subfloat[]{\includegraphics[width=0.15\textwidth]{./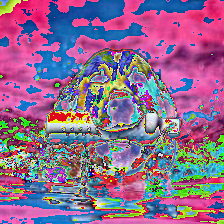}%
\label{fig:uaps}}
\hfil
\subfloat[]{\includegraphics[width=0.15\textwidth]{./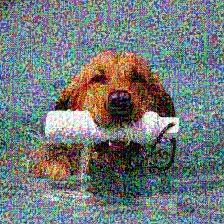}%
\label{fig:metafinger}}
\hfil
\subfloat[]{\includegraphics[width=0.15\textwidth]{./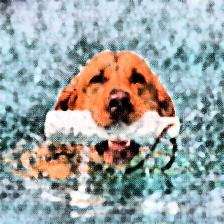}%
\label{fig:move}}
\hfil
\subfloat[]{\includegraphics[width=0.15\textwidth]{./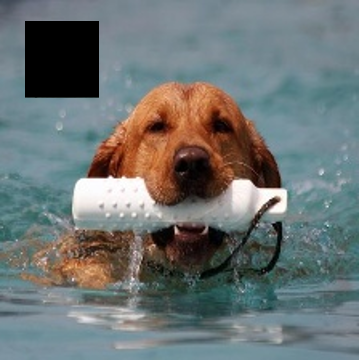}%
\label{fig:Holmes}}
\caption{Images involved in different defenses. \textbf{(a)} original image; \textbf{(b)} poisoned image in BadNets; \textbf{(c)} poisoned image in GM; \textbf{(d)} out-of-distribution watermark image from SVHN dataset in EWE; \textbf{(e)} poisoned image in PTYNet; \textbf{(f)} noised image in UAE; \textbf{(g)} noised image in UAPs; \textbf{(h)} noised image in Metafinger; \textbf{(i)} transformed image in MOVE; \textbf{(j)} poisoned image in \texttt{Holmes} for creating the poisoned shadow model.}
\label{fig:example_figure}
\vspace{-0.8em}
\end{figure*}

\subsection{Main Settings}
\label{sec:main_setting}

\vspace{0.3em}
\noindent \textbf{Dataset Selection.} We evaluate the effectiveness of \texttt{Holmes} using the CIFAR-10 \cite{krizhevsky2009learning} and ImageNet \cite{deng2009imagenet} datasets. CIFAR-10 consists of 60,000 images across 10 categories, with 50,000 samples allocated for training and 10,000 for testing. For ImageNet, we use a subset of 200 randomly selected classes to balance computational efficiency and representativeness. Each class in this subset includes 2,500 randomly selected samples for the training set and 50 samples for the testing set.

\vspace{0.3em}
\noindent \textbf{Settings of Personalization.} We adopt the CLIP \cite{radford2021learning} model (with a ViT-B/32 Transformer architecture \cite{vaswani2017attention} as the image encoder) as the victim model for experiments on the CIFAR-10 and ImageNet datasets. Specifically, we use the pre-trained CLIP model from the public repository\footnote{\url{https://huggingface.co/openai/clip-vit-base-patch32}} as the foundation model. For personalization, the victim models are obtained by fine-tuning the pre-trained foundation model on CIFAR-10 and ImageNet datasets for 20 epochs, respectively. The learning rate is set to $1 \times 10^{-5}$. To improve fine-tuning effectiveness, the learning rate for the linear layers ($i.e.$, the final layers of CLIP models) is increased by a factor of 10, reaching $1 \times 10^{-4}$.

\vspace{0.3em}
\noindent \textbf{Settings of Model Stealing.} Following the settings in \cite{maini2021dataset, li2022defending, li2025move}, we perform the model stealing attacks described in Section \ref{sec:model-stealing} to assess the effectiveness of the proposed method.

Before detailing each attack, we first clarify the architectures and datasets used for the stolen models across different scenarios. For \emph{Direct-copy} and \emph{Fine-tuning} attacks, the stolen model adopts the same architecture as the victim model. For all distillation-based attacks, including distillation with access to the training dataset (\emph{Data Distill}), data-free distillation (\emph{Data-free Distill}), \emph{Hard Distill}, and \emph{Soft Distill}, we employ a ResNet-18 network as the stolen model.

Regarding the datasets used in model stealing, we construct substitute datasets for \emph{Fine-tuning}, \emph{Hard Distill}, and \emph{Soft Distill} attacks, following \cite{maini2021dataset, li2022defending, li2025move}. For CIFAR-10 experiments, we randomly sample 500,000 images from the TinyImages dataset \cite{birhane2021large} as the substitute dataset. For ImageNet experiments, we select 2,500 samples from the same 200 classes used to fine-tune the victim model, ensuring that all selected samples are disjoint from those used during personalization.

In particular, \textbf{(1)} \emph{Direct-copy}: The adversary directly copies the victim model. \textbf{(2)} \emph{Fine-tuning}: The adversary obtains stolen models by fine-tuning victim models for 5 epochs on substitute datasets. \textbf{(3)} \emph{Data Distill}: We implement the CTKD method \cite{li2023curriculum} using open-source code\footnote{\url{https://github.com/zhengli97/ctkd}}. The stolen model is trained for 100 epochs on the training set of $V$ using the SGD optimizer with an initial learning rate of 0.1, momentum of 0.9, and a weight decay of $1 \times 10^{-4}$. \emph{(4) Data-free Distill}: We adopt NAYER \cite{tran2024nayer} as the attack method, using its open-source code\footnote{\url{https://github.com/tmtuan1307/NAYER}}. For both datasets, the stolen model is trained for 300 epochs, with other settings following the original work. \emph{(5) Hard Distill}: The stolen model is trained for 200 epochs on the substitute datasets using hard labels generated by the victim model. We adopt the SGD optimizer with an initial learning rate of 0.1, momentum of 0.9, and a weight decay of $1 \times 10^{-4}$. \emph{(6) Soft Distill}: The stolen model is trained using logit-based knowledge distillation \cite{sun2024logit}, implemented with the open-source code\footnote{\url{https://github.com/sunshangquan/logit-standardization-KD}}. Training is performed for 100 epochs using the SGD optimizer with an initial learning rate of 0.1, momentum of 0.9, and a weight decay of $1 \times 10^{-4}$. Besides, we conduct experiments of examining a suspicious model which is not stolen from the victim (dubbed `Independent') for reference. We utilize a pre-trained CLIP model with a ViT-B/16 Transformer architecture as the image encoder, sourced from the public repository\footnote{\url{https://huggingface.co/openai/clip-vit-base-patch16}}. This foundation model is fine-tuned on substitute datasets to create independent models for CIFAR-10 and ImageNet experiments, respectively. The fine-tuning settings for these independent models are consistent with those used for the victim models.

\vspace{0.3em}
\noindent \textbf{Settings of Defenses.} We compare \texttt{Holmes} with model watermarking and model fingerprinting methods. For model watermarking baselines, we examine methods including BadNets watermarking \cite{gu2019badnets} implemented via BackdoorBox \cite{li2022backdoorbox}, gradient matching (GM) \cite{geiping2021witches}, entangled watermarks (EWE) \cite{jia2021entangled}, PTYNet \cite{wang2023free}, UAE \cite{zhu2024reliable}, and MOVE \cite{li2025move}. All watermarking methods employ a consistent 10\% modification rate on training samples, with implementation details following their official open-source repositories. Figure \ref{fig:example_figure} illustrates representative examples of modified images across different defenses, including BadNets' black square trigger, poisoned image in GM, EWE's out-of-distribution samples from SVHN dataset \cite{netzer2011reading}, PTYNet's semantic background triggers, and the oil painting style image of MOVE. Specifically, MOVE adopts different approaches for white-box and black-box settings. Since this paper focuses on verification in black-box settings, all experiments are conducted under black-box settings.

For model fingerprinting baselines, we compare \texttt{Holmes} against dataset inference (DI) \cite{maini2021dataset}, universal adversarial perturbations (UAPs) \cite{peng2022fingerprinting}, and Metafinger \cite{yang2022metafinger}. DI adopts different approaches for white-box and black-box settings. Following the black-box verification focus of this work, DI is evaluated under its black-box configurations.

For our method, we construct the poisoned model $P$ by fine-tuning $V$ on $\mathcal{D}_p \cup \mathcal{D}_b$ ($\mathcal{D}_b \triangleq \mathcal{D} \backslash \mathcal{D}_s$) for 5 epochs. For both CIFAR-10 and ImageNet, we set the modification rate $\gamma \%=10\%$, resulting in $|\mathcal{D}_s|=5,000$ (CIFAR-10) and $|\mathcal{D}_s|=500$ (ImageNet). The poisoned trigger is a black square in the top-left corner, as shown in Figure \ref{fig:Holmes}. The target label is set to ``automobile'' for CIFAR-10 and ``white shark'' for ImageNet. The benign shadow model $B$ is obtained by fine-tuning the foundation model $F$ on $\mathcal{D}_f$ for 20 epochs with a dropping rate $\lambda \%=50\%$ for both datasets. Both $P$ and $B$ use a base learning rate of $1\times10^{-5}$. For $B$, the learning rate for the final linear layers increases to $1\times10^{-4}$ to enhance fine-tuning effectiveness. The meta-classifier $C$ is trained for 300 epochs with a learning rate of 0.01. The hypothesis test is conducted at significance level $\alpha=0.01$.

\begin{table*}[!ht]
\centering
\vspace{-2em}
  \caption{Results of defenses on the CIFAR-10 dataset. We mark failed verification cases ($i.e.$, p-value $>0.01$ for stolen models and p-value $<0.01$ for independent models) in red.}
\scalebox{0.655}{
\begin{tabular}{c|l|cc|cc|cc|cc|cc|cc||cc}
\toprule
\multicolumn{2}{c|}{Model Stealing$\rightarrow$ }& \multicolumn{2}{c|}{Direct-copy}  &  \multicolumn{2}{c|}{Fine-tuning
}&\multicolumn{2}{c|}{Data Distill}& \multicolumn{2}{c|}{Data-free Distill} & \multicolumn{2}{c|}{Hard Distill}   & \multicolumn{2}{c||}{Soft Distill}   & \multicolumn{2}{c}{Independent}  \\
\hline
\multicolumn{2}{c|}{Defense$\downarrow$}&$\Delta \mu$& p-value& $\Delta \mu$ & p-value &$\Delta \mu$& p-value&$\Delta \mu$ & p-value&$\Delta \mu$ & p-value&$\Delta \mu$ & p-value&$\Delta \mu$ & p-value\\
\hline
\multirow{6}{*}{Watermark} & BadNets & \textbf{0.84}& $\mathbf{10^{-60}}$ &  \textbf{0.42}& $\mathbf{10^{-21}}$ 
&\red{$10^{-3}$}&\red{0.05}& \red{-0.09}& \red{0.10}& \red{$10^{-3}$}&\red{0.16}& \red{$10^{-3}$}&\red{0.08}& $10^{-3}$  &0.08\\
       & GM        & 0.85& $10^{-57}$ &  $10^{-3}$  & $10^{-3}$  
&\red{0.01}& \red{0.07}&$10^{-3}$	&0.01& \red{$10^{-3}$}& \red{0.17}& \red{$10^{-3}$}&\red{0.17}& $10^{-3}$  & 0.16\\
       & EWE & 0.02& $10^{-26}$ &  \red{-$10^{-3}$} & \red{0.99}
&\red{-$10^{-6}$} & \red{0.99}& \red{-0.04}& \red{0.71}& \red{-$10^{-3}$} & \red{0.99}& \red{-$10^{-3}$} & \red{0.99}& $\mathbf{-10^{-3}}$ & \textbf{0.99}\\
       & PTYNet     & 0.22& $10^{-4}$  &  \red{-0.08}& \red{0.99}&\red{-0.07} & \red{0.17}& \red{-0.14}&\red{0.93}& \red{-0.04}& \red{0.99}& \red{-0.03}& \red{0.98}& \textbf{-0.02}& \textbf{0.99}\\
       & UAE        & \red{$10^{-3}$} & \red{0.17}&  \red{$10^{-3}$}  & \red{0.15}
&\red{-$10^{-3}$} & \red{0.24}& \red{-0.14}&\red{0.95}& \red{-$10^{-3}$} & \red{0.23}& \red{-$10^{-4}$} & \red{0.15}& -$10^{-3}$ & 0.11\\
& MOVE& $10^{-4}$ & $10^{-5}$  &  $10^{-4}$& $10^{-5}$&\red{$10^{-4}$}  & \red{0.30}&\red{$10^{-3}$}&	\red{0.10}&\red{$10^{-4}$}&\red{0.17}& $10^{-3}$  & 0.01& \textbf{0.00}  & \textbf{0.99}\\
\hline
\multirow{4}{*}{Fingerprint}  &UAPs&0.18&$10^{-4}$& \red{$10^{-3}$}& \red{0.08}&\red{0.08}
&\red{0.18}&\red{-0.16}&\red{0.97}&\red{-0.07}&\red{0.19}&\red{-0.09}&\red{0.25}&-$10^{-3}$&0.22 \\
       & Metafinger  & 0.36 & $10^{-3}$&  \red{$10^{-3}$}  & \red{0.27}
&\red{-0.01}& \red{0.76} &\red{-0.15}	&\red{0.83}& \red{0.01}& \red{0.02}& \red{-0.01}& \red{0.67}& -$10^{-4}$ & 0.49\\
       & DI & 0.96& $10^{-15}$ &  0.96& $10^{-15}$ 
&\textbf{0.91}& $\mathbf{10^{-14}}$&0.10	&$10^{-3}$ & \red{0.02}& \red{0.28}& \red{0.03}& \red{0.26}& \red{0.72}& \red{$10^{-9}$} \\
\cmidrule(lr){2-16}
& \texttt{Holmes} & 0.18& $10^{-14}$ &  0.12& $10^{-9}$&0.18& $10^{-12}$ &\textbf{0.13}&$\mathbf{10^{-9}}$& \textbf{0.05}& $\mathbf{10^{-4}}$ & \textbf{0.14}& $\mathbf{10^{-8}}$ & -0.03& 0.20\\
\bottomrule
\end{tabular}
}
\label{table:main-cifar10} 
\vspace{-1.5em}
\end{table*}

\begin{table*}[!ht]
\centering
\caption{Results of defenses on the ImageNet dataset. We mark failed verification cases ($i.e.$, p-value $>0.01$ for stolen models and p-value $<0.01$ for independent models) in red.}
%\vspace{-0.6em}
\scalebox{0.645}{
\begin{tabular}{c|l|cc|cc|cc|cc|cc|cc||cc}
\toprule
\multicolumn{2}{c|}{Model Stealing $\rightarrow$ }& \multicolumn{2}{c|}{Direct-copy}  &  \multicolumn{2}{c|}{Fine-tuning}&\multicolumn{2}{c|}{Data Distill}& \multicolumn{2}{c|}{Data-free Distill} & \multicolumn{2}{c|}{Hard Distill}   & \multicolumn{2}{c||}{Soft Distill}   & \multicolumn{2}{c}{Independent}  \\
\hline
\multicolumn{2}{c|}{Defense $\downarrow$}&$\Delta \mu$& p-value& $\Delta \mu$ & p-value
&$\Delta \mu$& p-value&$\Delta \mu$ & p-value&$\Delta \mu$ & p-value&$\Delta \mu$ & p-value&$\Delta \mu$ & p-value\\
\hline
\multirow{6}{*}{Watermark} & BadNets & 0.04 & $10^{-5}$ &  \red{$10^{-4}$} & \red{0.38} 
&\red{0.01} & \red{0.05} & \red{-0.01} & \red{0.75} & \red{$10^{-3}$} & \red{0.18} & \red{$10^{-3}$} & \red{0.17} & $10^{-5}$ & 0.31 \\
 & GM & 0.85 & $10^{-31}$ &  \red{$10^{-3}$} & \red{0.33} 
&\red{$10^{-3}$} & \red{0.18} & \red{-0.01} & \red{0.66} & \red{0.01} & \red{0.24} & \red{$10^{-3}$} & \red{0.27} & $10^{-3}$ & 0.21 \\
 & EWE & \red{$-10^{-5}$} & \red{0.99} &  \red{-0.02} & \red{0.99} 
&\red{$-10^{-3}$} & \red{0.94} & \red{-0.07} & \red{0.99} & \red{$-10^{-3}$} & \red{0.98} & \red{-0.01} & \red{0.98} & \textbf{-0.06} & \textbf{0.99} \\
 & PTYNet &0.11 & $10^{-3}$ &  \red{-0.02} & \red{0.08} 
&\red{-0.02} & \red{0.84} & \red{-0.03} & \red{0.96} & \red{-0.03} & \red{0.75} & \red{-0.01} & \red{0.03} & -0.01 & 0.52 \\
 & UAE  & 0.35 & $10^{-3}$ &  \red{$10^{-4}$} & \red{0.26} &\red{$10^{-4}$} & \red{0.22} & \red{$10^{-5}$} & \red{0.29} & \red{$10^{-7}$} & \red{0.21} & \red{$-10^{-7}$} & \red{0.33} & $10^{-5}$ & 0.23 \\
 & MOVE & $10^{-4}$ & $10^{-4}$ &  $10^{-4}$ & $10^{-5}$ 
&\red{$10^{-4}$} & \red{0.05} & 0.11 & $10^{-3}$ & \red{$-10^{-4}$} & \red{0.44} & \red{$-10^{-4}$} & \red{0.56} & \textbf{0.00} & \textbf{0.99} \\
 \hline
\multirow{4}{*}{Fingerprint} & UAPs & 0.52 & $10^{-7}$ &  0.01 & $10^{-3}$ 
&\red{$10^{-4}$} & \red{0.27} &\red{ -0.01} & \red{0.13} & \red{-0.10} & \red{0.33} & \red{-0.12} & \red{0.24} & -0.01 & 0.21 \\
 & Metafinger & $10^{-3}$ & $10^{-5}$ &  \red{$10^{-5}$} & \red{0.35} 
&\red{$10^{-3}$} & \red{0.31} & \red{-0.12} & \red{0.30} & \red{$10^{-4}$} & \red{0.36} & \red{$10^{-6}$} & \red{0.07} &$10^{-5}$ & 0.38 \\
 & DI & 0.54 & $10^{-8}$ &  0.54 & $10^{-9}$ 
&0.51 & $10^{-8}$ & 0.15 & $10^{-3}$ & 0.51 & $10^{-8}$ & 0.39 & $10^{-6}$ & \red{0.54} & \red{$10^{-8}$} \\
 \cmidrule(lr){2-16}
 & \texttt{Holmes} & \textbf{0.37} & $\mathbf{10^{-36}}$ &  \textbf{0.17} & $\mathbf{10^{-10}}$ &\textbf{0.27} & $\mathbf{10^{-20}}$ & \textbf{0.10} & $\mathbf{10^{-4}}$ & \textbf{0.37} & $\mathbf{10^{-32}}$ & \textbf{0.22} & $\mathbf{10^{-11}}$ & \textbf{-0.61} & \textbf{0.99}\\
 \bottomrule
\end{tabular}
}
\label{table:main-imagenet}
%\vspace{-0.5em}
\end{table*}

\vspace{0.3em}
\noindent \textbf{Evaluation Metric.} We use the confidence score $\Delta \mu$ and p-value as the metric for our evaluation. Both of them are calculated based on the hypothesis test with 100 sampled images, $i.e.$, $m=100$. For cases involving stolen models, a lower p-value and a higher $\Delta \mu$ indicate stronger verification performance. Conversely, for independent models, a higher p-value and a lower $\Delta \mu$ suggest better resistance to misclassification. The best results among all defenses are highlighted in bold, while failed verifications ($i.e.$, p-value $>\alpha$ for stolen models and p-value $<\alpha$ for independent models, where $\alpha=0.01$) are marked in red.

We note that some baseline methods ($e.g.$, model watermarking with BadNets) do not use hypothesis test for model ownership verification. These methods typically determine model ownership by assessing whether the probability assigned by the suspicious model to a watermarked sample for a specific target label is significantly higher than that assigned to its original version. For fair comparison, we also adopt hypothesis test for these methods, as follows: 

\begin{definition} Let $\bm{X}'$ and $\bm{X}$ denote the variable of the watermarked sample and its benign version, respectively. Define $\mu_{S}$ and $\mu_{B}$ as $\mu_{S} \triangleq S(\bm{X}')_{y_t}$ and $\mu_{B} \triangleq S(\bm{X})_{y_t}$, where $y_t$ is the target label. The $\Delta \mu$ is calculated as $\Delta \mu = \mu_{S} - \mu_{B}$. Given the null hypothesis $H_0: \mu_{B} + \tau = \mu_{S}$ ($H_1: \mu_{B} + \tau < \mu_{S}$) where $\tau \in [0, 1]$, we claim that the suspicious model $S$ is stolen from the victim (with $\tau$-certainty) if and only if $H_0$ is rejected.
\end{definition}

\subsection{Main Results}

As shown in Tables \ref{table:main-cifar10}-\ref{table:main-imagenet}, \texttt{Holmes} is the only method that achieves reliable verification across all model stealing scenarios, outperforming baselines in most cases. For example, in the Soft Distill attack on the CIFAR-10 dataset, our method achieves a p-value of $10^{-8}$, significantly outperforming MOVE, the only baseline capable of successful verification in this scenario, which achieves a p-value of 0.01. In the cases of Direct-copy, Data Distill, and Fine-tuning, methods such as BadNets achieve competitive results. Nevertheless, our method can still easily make correct predictions in these cases. Besides, on the ImageNet dataset, our method demonstrates superior performance across all scenarios, including both model stealing cases and the independent model case. For example, in the Data Distill attack scenario, our method achieves a p-value of $10^{-20}$, outperforming DI's $10^{-8}$, while all other baseline methods fail to provide model ownership verification. Additionally, in the independent model scenario for both CIFAR-10 and ImageNet datasets, our method consistently avoids misjudgments, while DI misclassifies the benign independent models as stolen models. This indicates that our method effectively decouples common features while emphasizing dataset-specific features, ensuring reliable model ownership verification across diverse datasets and model stealing attack methods.

\subsection{Effects of Hyper-parameters}
In this section, we discuss the effects of hyper-parameters involved in our method.

\subsubsection{Effects of Modification Rate}

\begin{table*}[!t]
\centering
\vspace{-1.5em}
\caption{Effects of modification rate of the poisoned shadow model.}
%\vspace{-0.6em}
\scalebox{0.69}{
\begin{tabular}{c|cc|cc|cc|cc|cc|cc||cc}
\toprule
Model Stealing $\rightarrow$& \multicolumn{2}{c|}{Direct-copy} &  \multicolumn{2}{c|}{Fine-tuning}&\multicolumn{2}{c|}{Data Distill} & \multicolumn{2}{c|}{Data-free Distill}  & \multicolumn{2}{c|}{Hard Distill} & \multicolumn{2}{c||}{Soft Distill} & \multicolumn{2}{c}{Independent} \\
\hline
Modification Rate $\downarrow$& $\Delta \mu$& p-value &  $\Delta \mu$& p-value 
&$\Delta \mu$& p-value & $\Delta \mu$& p-value & $\Delta \mu$& p-value & $\Delta \mu$& p-value & $\Delta \mu$& p-value \\
\hline
0\% & {-0.03} & {0.40} &  {-0.21} & {0.99} 
&{-0.09} & 
{0.12} & {-0.01} & {0.12} & {$-10^{-3}$} & {0.33} & {-0.30} & {0.51} & {0.07} & {$10^{-4}$} \\
\hline
5\% & 0.17 & $10^{-11}$ &  0.02 & 0.01 
&0.16 & $10^{-8}$ & 0.23 & $10^{-9}$ & 0.05 & $10^{-4}$ & 0.27 & $10^{-16}$ & -0.04 & 0.30 \\
10\% & 0.18 & $10^{-14}$ &  0.12 & $10^{-9}$ 
&0.18 & $10^{-12}$ & 0.13 & $10^{-9}$ & 0.05 & $10^{-4}$ & 0.14 & $10^{-8}$ & -0.03 & 0.20 \\
15\% & 0.19 & $10^{-12}$ &  $10^{-3}$ & $10^{-3}$ 
&0.15 & $10^{-8}$ & 0.23 & $10^{-16}$ & 0.03 & $10^{-3}$ & 0.22 & $10^{-21}$ & -0.20 & 0.29 \\
20\% & 0.19 & $10^{-15}$ &  0.14 & $10^{-5}$ &0.07 & $10^{-10}$ & 0.12 & $10^{-3}$ & 0.02 & $10^{-3}$ & 0.09 & $10^{-7}$ & -0.01 & 0.04\\
\bottomrule
\end{tabular}
}
\label{table:poison_rate}
\vspace{-1.5em}
\end{table*}

\begin{table*}[!t]
\centering
\caption{Effects of drop rate of the benign shadow model.}
\scalebox{0.715}{
\begin{tabular}{c|cc|cc|cc|cc|cc|cc||cc}
\toprule
Model Stealing $\rightarrow$ & \multicolumn{2}{c|}{Direct-copy} &  \multicolumn{2}{c|}{Fine-tuning
}&\multicolumn{2}{c|}{Data Distill} & \multicolumn{2}{c|}{Data-free Distill}  & \multicolumn{2}{c|}{Hard Distill} & \multicolumn{2}{c||}{Soft Distill} & \multicolumn{2}{c}{Independent} \\
\hline
Drop Rate $\downarrow$& $\Delta \mu$& p-value &  $\Delta \mu$& p-value 
&$\Delta \mu$& p-value & $\Delta \mu$& p-value & $\Delta \mu$& p-value & $\Delta \mu$& p-value & $\Delta \mu$& p-value \\
\hline
0\% & 0.29 & $10^{-12}$ &  {-0.07} & {0.38} 
&{$10^{-5}$} & {0.59} & {-0.01} & {0.63} & -0.02& {0.02} & {-0.04} & {0.22} & {0.13} & {$10^{-7}$} \\
\hline
10\% & 0.20 & $10^{-14}$ &  0.01 & $10^{-9}$ 
&0.17 & $10^{-9}$ & 0.17 & $10^{-13}$ & 0.04 & $10^{-3}$ & 0.08 & $10^{-5}$ & -0.03 & 0.15 \\
30\% & 0.15 & $10^{-14}$ &  0.12 & $10^{-9}$ 
&0.26 & $10^{-11}$ & 0.25 & $10^{-14}$ & 0.03 & $10^{-4}$ & 0.15 & $10^{-8}$ & -0.07 & 0.19 \\
50\% & 0.18 & $10^{-14}$ &  0.12 & $10^{-9}$ 
&0.18 & $10^{-12}$ & 0.13 & $10^{-9}$ & 0.05 & $10^{-4}$ & 0.14 & $10^{-8}$ & -0.03 & 0.20 \\
70\% & 0.21 & $10^{-26}$  &   0.16 & $10^{-7}$  &0.30 & $10^{-18}$  & 0.25 & $10^{-25}$  & 0.20 & $10^{-11}$ &  0.25 &  $10^{-15}$ &  -0.03 & 0.19 \\
\bottomrule
\end{tabular}
}
\label{table:drop_rate}
%\vspace{-1.5em}
\end{table*}

\begin{table*}[!t]
\centering
\vspace{-2em}
\caption{Effects of number of verification samples.}
\scalebox{0.725}{
\begin{tabular}{c|cc|cc|cc|cc|cc|cc||cc}
\toprule
Model Stealing$\rightarrow$ & \multicolumn{2}{c|}{Direct-copy} &  \multicolumn{2}{c|}{Fine-tuning}&\multicolumn{2}{c|}{Data Distill} & \multicolumn{2}{c|}{Data-free Distill}  & \multicolumn{2}{c|}{Hard Distill} & \multicolumn{2}{c||}{Soft Distill} & \multicolumn{2}{c}{Independent} \\
\hline
$m$$\downarrow$& $\Delta \mu$ & p-value &  $\Delta \mu$& p-value 
&$\Delta \mu$& p-value & $\Delta \mu$& p-value & $\Delta \mu$& p-value & $\Delta \mu$& p-value & $\Delta \mu$& p-value \\
\hline
$m=50$ & 0.17 & $10^{-7}$ &  0.11 & $10^{-9}$ 
&0.18 & $10^{-7}$ & 0.13 & $10^{-4}$ & 0.05 & $10^{-3}$ & 0.12 & $10^{-6}$ & -0.02 & 0.14 \\
$m=100$ & 0.18 &$10^{-14}$ &  0.12 & $10^{-9}$ 
&0.18 & $10^{-12}$ & 0.13 & $10^{-9}$ & 0.05 & $10^{-4}$ & 0.14 & $10^{-8}$ & -0.03 & 0.20 \\
$m=150$ & 0.18 & $10^{-14}$ &  0.12 & $10^{-9}$ 
&0.18 & $10^{-15}$ & 0.14 & $10^{-9}$ & 0.05 & $10^{-4}$ & 0.14 & $10^{-10}$ & -0.03 & 0.28 \\
$m=200 $& 0.19 & $10^{-19}$ &  0.12 & $10^{-9}$ &0.17 & $10^{-15}$ & 0.14 & $10^{-10}$ & 0.05 & $10^{-4}$ & 0.14 & $10^{-10}$ & -0.03 & 0.36 \\
\bottomrule
\end{tabular}
}
\label{table:sample_number}

\end{table*}

In general, a higher modification rate $\gamma \%$ leads to greater disruption of dataset-specific features and less preservation of common features in model $P$. As shown in Table \ref{table:poison_rate}, when the modification rate is 0\% ($i.e.$, no poisoning is applied), the method fails to distinguish between stolen and benign models, leading to misjudgments. This is because the output difference between the victim model and the unpoisoned shadow model (which is essentially similar to the victim model) fails to isolate the dataset-specific features, causing the meta-classifier to fail to identify dataset-specific features. Conversely, when $\gamma\% \in \{5\%,10\%,15\%,20\%\}$, the method achieves accurate verification across all cases. Specifically, for the Direct-copy attack, $\gamma \%$ of 5\% yields a p-value as low as $10^{-11}$, indicating effective verification. As $\gamma \%$ increases from 5\% to 20\%, the p-values exhibit a slight downward trend ($e.g.$, from $10^{-11}$ to $10^{-15}$ for Direct-copy attacks), reflecting reliable disruption of dataset-specific features.

\subsubsection{Effects of Drop Rate}
We drop $\lambda \%$ training samples with the lowest loss values from $V$'s training dataset to ensure that $B$ learns different dataset-specific features while preserving similar common features. This approach allows the output difference between $B$ and $P$ to represent similar but distinct dataset-specific features. This enables the meta-classifier to better distinguish between independent benign models fine-tuned on similar datasets and actual stolen models. In general, a higher drop rate leads to greater differences in dataset-specific features but reduces the preservation of common features. As shown in Table \ref{table:drop_rate}, when $\lambda \% = 0\%$, our method fails to provide ownership verification in almost all cases because the benign shadow model learns dataset-specific features that are too similar to those of the victim model. For drop rates of $\lambda \% \in \{ 10\%, 30\%, 50\%, 70\% \} $, our method succeeds in providing effective verification in all cases.

\begin{table*}[!ht]
\centering
\caption{Effects of model structures.}
%\vspace{-0.6em}
\scalebox{0.725}{
\begin{tabular}{c|cc|cc|cc|cc|cc|cc||cc}
\toprule
Model Stealing$\rightarrow$ & \multicolumn{2}{c|}{Direct-copy} &  \multicolumn{2}{c|}{Fine-tuning}&\multicolumn{2}{c|}{Data Distill} & \multicolumn{2}{c|}{Data-free Distill}  & \multicolumn{2}{c|}{Hard Distill} & \multicolumn{2}{c||}{Soft Distill} & \multicolumn{2}{c}{Independent} \\
\hline
Model Structure$\downarrow$& $\Delta \mu$ & p-value &  $\Delta \mu$& p-value 
&$\Delta \mu$& p-value & $\Delta \mu$& p-value & $\Delta \mu$& p-value & $\Delta \mu$& p-value & $\Delta \mu$& p-value \\
\hline
BEiT & 0.24 & $10^{-16}$ & 0.14 & $10^{-3}$ & 0.13 & $10^{-4}$ & 0.01 & $10^{-3}$ & 0.13 & $10^{-3}$ & 0.17 & $10^{-3}$&-0.33 & 0.99\\
ResNet & 0.28 & $10^{-31}$ & 0.28 & $10^{-21}$ & 0.22 & $10^{-20}$ & 0.23 & $10^{-10}$&0.01 & $10^{-3}$&0.24 & $10^{-24}$&-0.05 & 0.29\\
VGG & 0.11 & $10^{-6}$ &0.01 & $10^{-3}$&0.06 & $10^{-6}$&0.02 & $10^{-3}$&0.03 & $10^{-3}$&0.06 & $10^{-3}$&-0.08 & 0.60\\
CLIP & 0.18& $10^{-14}$ &  0.12& $10^{-9}$&0.18& $10^{-12}$ &{0.13}&${10^{-9}}$&{0.05}& ${10^{-4}}$ & {0.14}& ${10^{-8}}$ & -0.03& 0.20\\

\bottomrule
\end{tabular}
}
\label{table:model_structure}
%\vspace{-1.5em}
\end{table*}

\subsubsection{Effects of the Number of Verification Samples}
\texttt{Holmes} requires specifying the number of verification samples from $\mathcal{D}_s$ ($i.e.$, $m$) used in the hypothesis test. Generally, increasing $m$ mitigates the randomness inherent in the sampling process, thereby enhancing the confidence of the verification results. This is probably the main reason why the p-value consistently decreases as $m$ increases. For example, as shown in Table \ref{table:sample_number}, the p-value decreases from $10^{-6}$ ($m=50$) to $10^{-10}$ ($m=200$) for the soft distillation attack. However, larger $m$ introduces higher computational costs due to increased model queries and processing overhead. This creates a trade-off between verification confidence and computational efficiency, requiring defenders to balance $m$ based on specific deployment constraints and verification requirements.

\subsubsection{Effects of Model Structures}
In this section, we discuss the generalization of our method across different model architectures. Specifically, we consider two cases: (1) the stolen models have the same structures as the victim model, and (2) the stolen models have different structures from the victim model. In the first case, we conduct experiments on BEiT to BEiT. In the second case, we consider three settings, including ResNet34 to ResNet18, VGG19 to VGG16, and CLIP to ResNet18. The fine-tuning process of victim models and the model stealing settings remain consistent with those described in Section \ref{sec:main_setting}. As shown in Table \ref{table:model_structure}, our method remains effective under all settings without misjudgments.

\subsection{Ablation Study}

There are three key parts contained in our method, including \textbf{(1)} the poisoned shadow model $P$, \textbf{(2)} the benign shadow model $B$, and \textbf{(3)}  the selection strategy for $\mathcal{D}_s$ and $\mathcal{D}_f$. In this section, we evaluate the effectiveness of each component.

\subsubsection{The Effectiveness of the Poisoned Shadow Model}

Our method creates a poisoned shadow model $P$ that retains the common features learned by the victim model while disrupting dataset-specific features. The dataset-specific features are then represented by the output difference between $V$ and $P$. To verify the effectiveness of decoupling the common features using $P$, we conduct an experiment where $P$ is omitted.

Specifically, we modify the training set $\mathcal{D}_c$ of meta-classifier $C$ as follows: 
\begin{equation}
\begin{aligned}
   \mathcal{D}_c = & \left\{\left(V(\bm{x}), +1\right)| (\bm{x}, y) \in \mathcal{D}_s \right\} \cup \\
   & \left\{\left(B(\bm{x}), -1\right)| (\bm{x}, y) \in \mathcal{D}_s \right\},
\end{aligned}
\end{equation}
where $C$ is trained to distinguish between the output vectors of $V$ and $B$ on $\mathcal{D}_s$.

Similar to the verification process described in Section \ref{sec:hypothesis_test}, we also adopt hypothesis test to mitigate the randomness as follows:

\begin{definition} Let $\bm{X}$ be a random variable representing samples from $\mathcal{D}_s$. Define $\mu_{S}$ and $\mu_{B}$ as the posterior probabilities of the events $C(S(\bm{x})) = +1$ and $C(S(\bm{x})) = -1$, respectively. Given the null hypothesis $H_0: \mu_{B} + \tau = \mu_{S}$ ($H_1: \mu_{B} + \tau < \mu_{S}$) where $\tau \in [0, 1]$, we claim that the suspicious model $S$ is stolen from the victim (with $\tau$-certainty) if and only if $H_0$ is rejected. 
\end{definition}

\begin{table}[!t]
\centering
\vspace{-1.8em}
\caption{The effectiveness of shadow models.}
%\vspace{-0.8em}
\setlength{\tabcolsep}{0.6mm}{
\begin{tabular}{l|cc|cc|cc}
\toprule
 \multirow{2}{*}{Model Stealing$\downarrow$}& \multicolumn{2}{c|}{Ours} & \multicolumn{2}{c|}{w/o $P$} & \multicolumn{2}{c}{w/o $B$} \\
\cmidrule{2-7}
& $\Delta \mu$ & p-value & $\Delta \mu$ & p-value & $\Delta \mu$ & p-value \\
\hline
Direct-copy & 0.18 & $10^{-14}$ & 0.32 & $10^{-32}$ &0.28 & $10^{-24}$ \\
Fine-tuning & 0.12 & $10^{-9}$ & 0.25 & $10^{-30}$ & 0.23 & $10^{-16}$ \\
Data Distill & 0.18 & $10^{-12}$ & 0.32 & $10^{-30}$ & 0.11& $10^{-6}$\\
Data-free Distill & 0.13 & $10^{-9}$ & 0.36 & $10^{-36}$ & 0.14& $10^{-4}$\\
Hard Distill & 0.05 & $10^{-4}$ & \red{-0.18}& \red{0.19}& 0.02& $10^{-3}$\\
Soft Distill & 0.14 & $10^{-8}$ & \red{-0.21}& \red{0.22}& 0.07& $10^{-3}$\\
\hline
Independent & -0.03 & 0.20 & \red{0.11} & \red{$10^{-3}$} & \red{0.07}& \red{$10^{-7}$}\\
\bottomrule
\end{tabular}
}
\label{table:shadow_model}
\vspace{-1.0em}
\end{table}

As shown in Table \ref{table:shadow_model}, the absence of a poisoned shadow model not only leads to the false positive of the independent model but also causes misjudgment of hard distillation and soft distillation attacks. This is likely because the verification process fails to capture dataset-specific features. Without a poisoned shadow model to decouple these common features, the meta-classifier becomes unable to distinguish between independent models and actual stolen models, resulting in model ownership verification failures.

\begin{table}[!t]
\centering
\vspace{-2.0em}
\caption{Effects of $\mathcal{D}_s$ selection.}
%\vspace{-0.7em}
\setlength{\tabcolsep}{0.4mm}{
\begin{tabular}{l|cc|cc|cc}
\toprule
$\mathcal{D}_{s}\rightarrow$ & \multicolumn{2}{c|}{Lowest Loss} & \multicolumn{2}{c|}{Highest Loss} & \multicolumn{2}{c}{Random} \\
\hline
Model Stealing$\downarrow$& $\Delta \mu$ & p-value & $\Delta \mu$ & p-value & $\Delta \mu$ & p-value \\
\hline
Direct-copy & \textbf{0.18} & $\mathbf{10^{-14}}$ & 0.06 & $10^{-6}$ & 0.20 & $10^{-5}$ \\
Fine-tuning & \textbf{0.12} &$\mathbf{10^{-9}}$ & \red{-0.04} & \red{0.37} & 0.03 & $10^{-4}$ \\
Data Distill & \textbf{0.18} & $\mathbf{10^{-12}}$ & \red{-0.04} & \red{0.43} & 0.10 & $10^{-4}$ \\
Data-free Distill & \textbf{0.13} & $\mathbf{10^{-9}}$ & 0.13 & $10^{-7}$ & 0.07 & $10^{-5}$ \\
Hard Distill & \textbf{0.05} & $\mathbf{10^{-4}}$ & \textbf{0.04} &$\mathbf{10^{-4}}$ & \textbf{0.04} & $\mathbf{10^{-4}}$ \\
Soft Distill & 0.14 & $10^{-8}$ & \textbf{0.02} & $\mathbf{10^{-11}}$ & 0.20 & $10^{-5}$ \\
\hline
Independent & -0.03 & 0.20 & \red{0.18} & \red{$10^{-13}$} & \textbf{-0.05} & \textbf{0.59}\\
\bottomrule
\end{tabular}
}
\label{table:loss_order_ds}
\vspace{-2em}
\end{table}

\subsubsection{The Effectiveness of the Benign Shadow Model}

Our method adopts a benign shadow model $B$ to further enhance the effectiveness of the meta-classifier. To evaluate the effectiveness of $B$, we conduct an experiment where $B$ is omitted. Specifically, we modify the training set $\mathcal{D}_c$ of meta-classifier $C$ as follows: 
\begin{equation}
\begin{aligned}
   \mathcal{D}_c = & \left\{\left(V(\bm{x})-P(\bm{x}), +1\right)| (\bm{x}, y) \in \mathcal{D}_s \right\}.
\end{aligned}
\end{equation}
$C$ is trained only on positive samples using the Isolation Forest method \cite{liu2008isolation}. To mitigate the randomness in the model ownership verification process, we also employ a hypothesis test approach, formulated as follows:

\begin{definition} Let $\bm{X}$ be a random variable representing samples from $\mathcal{D}_s$. Define $\mu_{S}$ and $\mu_{B}$ as the probabilities of the events $C(S(\bm{x})-P(\bm{x})) = +1$ and $C(S(\bm{x})-P(\bm{x})) = -1$, respectively. These probabilities are estimated by the frequencies of +1 and -1 outputs of $C$ over $z$ randomly selected samples from $\mathcal{D}_s$. Given the null hypothesis $H_0: \mu_{B} + \tau = \mu_{S}$ ($H_1: \mu_{B} + \tau < \mu_{S}$) where $\tau \in [0, 1]$, we claim that the suspicious model $S$ is stolen from the victim (with $\tau$-certainty) if and only if $H_0$ is rejected.
\end{definition}

As shown in Table \ref{table:shadow_model}, when $z=10, m=100$, without the benign shadow model, the independent model is misjudged as a stolen model. This is probably because the independent model shares similar dataset-specific features with the victim model. Unable to distinguish the victim's dataset-specific features from the analogous yet distinct ones of other models, the classifier confuses independent models with actual stolen ones.

\subsubsection{The Effectiveness of the Selection Strategy for $\mathcal{D}_s$ and $\mathcal{D}_f$}

Our method selects $\gamma \%$ of samples with the lowest loss values to form $\mathcal{D}_s$ and drops $\lambda \%$ with the lowest loss values to form $\mathcal{D}_f$, as these samples likely correspond to the model's propensity for memorizing dataset-specific patterns rather than common features \cite{ilyas2019adversarial}. To verify the effectiveness of this selection strategy, we conduct experiments using two alternative approaches: selecting the $\gamma \%$ ($\lambda \%$) of samples with the highest loss values and randomly selecting $\gamma \%$ ($\lambda \%$) of the samples.

As shown in Table \ref{table:loss_order_ds}-\ref{table:loss_order_df}, selecting samples with the lowest loss values yields the best verification results in almost all cases for both $\mathcal{D}_s$ and $\mathcal{D}_f$ selections. For $\mathcal{D}_s$, selecting samples with the highest loss values fails in cases of fine-tuning attacks, distillation attacks, and identifying the independent model. This failure is probably because the samples with the highest loss values are less effectively learned by the victim model compared to other samples, making them inadequate to accurately capture dataset-specific features, leading to ineffective verification. For $\mathcal{D}_f$, dropping the highest-loss samples causes failure in identifying the independent model. This is probably because dropping highest-loss samples (instead of dropping lowest-loss ones) makes the benign shadow model retain dataset-specific features too similar to those of the victim model. Consequently, the meta-classifier struggles to distinguish independent models that exhibit similar but distinct dataset-specific features, leading to misjudgment.

\begin{table}[!t]
\centering
\caption{Effects of $\mathcal{D}_f$ selection.}
\vspace{-2em}
\setlength{\tabcolsep}{0.4mm}{
\begin{tabular}{l|cc|cc|cc}
\toprule
$\mathcal{D}_{f}\rightarrow$ & \multicolumn{2}{c|}{Lowest Loss} & \multicolumn{2}{c|}{Highest Loss} & \multicolumn{2}{c}{Random} \\
\hline
Model Stealing$\downarrow$& $\Delta \mu$ & p-value & $\Delta \mu$ & p-value & $\Delta \mu$ & p-value \\
\hline
Direct-copy & \textbf{0.18} & $\mathbf{10^{-14}}$ & 0.15& $10^{-4}$& 0.15& $10^{-9}$\\
Fine-tuning & \textbf{0.12} &$\mathbf{10^{-9}}$ & 0.13& 0.01& 0.09& $10^{-4}$\\
Data Distill & \textbf{0.18} & $\mathbf{10^{-12}}$ & 0.01& $10^{-3}$& 0.11& $10^{-6}$\\
Data-free Distill & \textbf{0.13} & $\mathbf{10^{-9}}$ & 0.08& $10^{-3}$& 0.09& $10^{-5}$\\
Hard Distill & \textbf{0.05} & $\mathbf{10^{-4}}$ & 0.01&0.01& 0.03& 0.01\\
Soft Distill & \textbf{0.14}& $\mathbf{10^{-8}}$ & 0.02& $10^{-3}$& 0.08& $10^{-3}$\\
\hline
Independent & \textbf{-0.03}& \textbf{0.20}& \red{$10^{-3}$}& \red{$10^{-3}$}& -0.01& 0.12\\
\bottomrule
\end{tabular}
}
\label{table:loss_order_df}
\vspace{-2em}
\end{table}

\begin{table*}[!t]
\centering
%\vspace{-0em}
\caption{The resistance to potential adaptive attacks.}
%\vspace{-0.6em}
\scalebox{0.95}{
\begin{tabular}{l|cc|cc|cc|cc}
\toprule
Attack$\rightarrow$ & \multicolumn{2}{c|}{No Attack} &  \multicolumn{2}{c|}{Overwriting}&\multicolumn{2}{c|}{Unlearning}& \multicolumn{2}{c}{Pruning} \\
\hline
\multicolumn{1}{c|}{Model Stealing$\downarrow$} & $\Delta \mu$ & p-value &  $\Delta \mu$ & p-value&$\Delta \mu$ & p-value& $\Delta \mu$ & p-value \\
\hline
Direct-copy & 0.18& $10^{-14}$&  0.11& $10^{-10}$
&0.10& $10^{-8}$& 0.11&$10^{-10}$\\
Fine-tuning & 0.12 & $10^{-9}$ &  0.07& $10^{-3}$&0.02 & 0.01& 0.10& $10^{-4}$\\
Data Distill & 0.18 & $10^{-12}$ &  0.22 & $10^{-7}$
&0.14 & $10^{-7}$ & 0.10&$10^{-8}$\\
Data-free Distill & 0.13 & $10^{-9}$ &  0.04 & $10^{-3}$ 
&0.01 & 0.01& 0.09& 0.01\\
Hard Distill & 0.05 & $10^{-4}$ &  0.03 & $10^{-4}$ 
&0.01 & $10^{-3}$& 0.01& 0.01\\
Soft Distill & 0.14 & $10^{-8}$ &  0.03 & $10^{-4}$
&0.14 & $10^{-4}$& 0.08& $10^{-3}$\\
\bottomrule
\end{tabular}
}
\label{table:unlearning_attack}
%\vspace{-1.5em}
\end{table*}

\section{Discussion}
\subsection{Resistance to Potential Adaptive Attacks}
In this section, we analyze the resistance of our method against potential adaptive attacks. Specifically, we consider overwriting attack, unlearning attack, and model pruning \cite{kwon2022fast} as possible attack strategies, where the adversary is aware of our defense method and has access to $\mathcal{D}_s$ and $\mathcal{D}$.

\subsubsection{Resistance to Overwriting Attack}
The adversary may attempt to overwrite the dataset-specific features inherited from the victim model by implanting alternative features through fine-tuning. Specifically, the adversary fine-tunes the victim model for 20 epochs on the TinyImages dataset (distinct from the victim's CIFAR-10 training data) to assess the erasure of original dataset-specific features. As shown in Table \ref{table:unlearning_attack}, our method remains effective against the overwriting attack. These results demonstrate that even when adversaries are aware of our defense method, our method robustly resists adaptive fine-tuning designed to eliminate dataset-specific features.

\subsubsection{Resistance to Unlearning Attack}

In this scenario, the adversary attempts to make the victim model forget the dataset-specific features, thereby undermining the effectiveness of model ownership verification. Specifically, the adversary randomly reassigns the labels of all samples in $\mathcal{D}_s$ to incorrect labels, creating a modified dataset $\mathcal{D}'_s$. The adversary then fine-tunes the victim model on $\mathcal{D}'_s \cup \mathcal{D}_b$ for 5 epochs, where $\mathcal{D}_b \triangleq \mathcal{D} \backslash \mathcal{D}_s$. As shown in Table \ref{table:unlearning_attack}, our method remains effective in defending against the unlearning attack, demonstrating its robustness.

\subsubsection{Resistance to Model Pruning}
Given the prevalent use of model pruning in adapting personalized LVMs, we evaluate its potential impact on our method's verification reliability. In this experiment, we apply retraining-free pruning \cite{kwon2022fast} to the victim model, following the original paper's settings. As shown in Table \ref{table:unlearning_attack}, our method still succeeds in all cases, demonstrating its robustness against pruning.

\subsection{Potential Task Adaptation}
Arguably, our \texttt{Holmes} framework is inherently adaptable to more complex visual tasks beyond image classification (e.g., image captioning). In this section, we use image captioning as a representative complex generative task to discuss how \texttt{Holmes} can be extended accordingly.

\subsubsection{Method Adaptation}
When adapting to the image captioning task, the core three-stage pipeline of our \texttt{Holmes} (shadow model construction, meta-classifier training, hypothesis testing) remains unchanged, with key modifications limited to adjusting poisoned shadow model construction and projecting text outputs into the feature space.

\vspace{0.3em}
\noindent \textbf{Shadow Model Construction for Image Captioning.} 
In general, shadow model construction follows the same principle in Section \ref{sec:creating_shadow_models}, involving a poisoned shadow model $P_{\text{cap}}$ (preserving common features while disrupting dataset-specific features) and a benign shadow model $B_{\text{cap}}$ (mitigating false positives from independent models). The only adjustment lies in the poisoning process for $P_{\text{cap}}$: Let $\mathcal{D}_{\text{cap}}$ denote the captioning dataset: $\mathcal{D}_{\text{cap}} = \{ (\bm{x}_i, \bm{c}_i) \}_{i=1}^N $, where $\bm{c}_i \in \mathcal{C}$ is the ground-truth caption. $\gamma\%$ of samples with the lowest loss are selected as $\mathcal{D}_{s,\text{cap}}$, which is then poisoned by injecting a trigger $\bm{t}$ via generator $G(\cdot)$ to produce poisoned images $\bm{x}' = G(\bm{x}, \bm{t})$, with a target caption $\bm{c}_t$ ($e.g.$, ``I don't know.'') assigned to form $\mathcal{D}_{p,\text{cap}}$. $P_{\text{cap}}$ is obtained by fine-tuning $V_{\text{cap}}$ on $\mathcal{D}_{p,\text{cap}} \cup \mathcal{D}_{b,\text{cap}} $ ( where $\mathcal{D}_{b,\text{cap}} \triangleq  \mathcal{D}_{\text{cap}} \setminus \mathcal{D}_{s,\text{cap}})$. The benign shadow model $B_{\text{cap}}$ follows the same construction as $B$ in Section \ref{sec:creating_shadow_models}.

\vspace{0.3em}
\noindent \textbf{Feature Projection and Output Difference Calculation.} Since $V_{\text{cap}} $, $P_{\text{cap}}$, and $B_{\text{cap}}$ output text captions (not fixed-dimensional logits), we project these captions into a shared feature space using CLIP's text encoder \cite{radford2021learning}: $\mathcal{E}_{\text{CLIP}}: \mathcal{C} \to \mathbb{R}^d $, where $d$ is the embedding dimension. This encoder leverages pre-trained semantic alignment to ensure consistent comparison of text outputs. For $\bm{x} \in \mathcal{D}_{s,\text{cap}}$, let $\bm{c}_V = V_{\text{cap}}(\bm{x})$, $\bm{c}_P = P_{\text{cap}}(\bm{x}) $, and $\bm{c}_B = B_{\text{cap}}(\bm{x})$ denote captions from the victim, poisoned, and benign models, respectively. These captions are projected into embeddings via $\mathcal{E}_{\text{CLIP}}$: $\bm{e}_V=\mathcal{E}_{\text{CLIP}}(\bm{c}_V)$, $\bm{e}_P=\mathcal{E}_{\text{CLIP}}(\bm{c}_P)$, and $\bm{e}_B=\mathcal{E}_{\text{CLIP}}(\bm{c}_B)$. Output differences are computed as (mirroring Section \ref{sec:method_clf}): $\bm{O}_{v,\text{cap}} = \bm{e}_V - \bm{e}_P$, $\bm{O}_{b,\text{cap}} = \bm{e}_B - \bm{e}_P$, where $\bm{O}_{v,\text{cap}}$ captures $V_{\text{cap}}$'s dataset-specific features, and $\bm{O}_{b,\text{cap}}$ captures unrelated features.

The following process of training ownership meta-classifier and ownership verification follow Sections \ref{sec:method_clf} and \ref{sec:hypothesis_test}: the meta-classifier $C$ is trained on $\bm{O}_{v,\text{cap}}$ (label `+1') and $\bm{O}_{b,\text{cap}}$ (label `-1'). For a suspicious model $S_{\text{cap}}$, project its caption $\bm{c}_S$ to $\bm{e}_S = \mathcal{E}_{\text{CLIP}}(\bm{c}_S)$, compute $\bm{O}_{s,\text{cap}} = \bm{e}_S - \bm{e}_P$, and then apply the hypothesis test on $C(\bm{O}_{s,\text{cap}})$ to verify model ownership.

\subsubsection{Experiment Settings}

\vspace{0.3em}
\noindent \textbf{Dataset Selection.} We conduct experiments on the COCO2014 dataset \cite{lin2014microsoft}, with a training set (16,000 image-caption pairs) and a validation set (8,000 image-caption pairs). For fine-tuning, hard distillation, and soft distillation attacks, 7,000 image-caption pairs are randomly selected from the validation set as the substitute dataset.

\vspace{0.3em}
\noindent \textbf{Settings of Personalization.} The victim models are constructed by fine-tuning the pre-trained foundation models Qwen2.5-VL-7B-Instruct\footnote{\url{https://huggingface.co/Qwen/Qwen2.5-VL-7B-Instruct}} \cite{bai2025qwen2} and Gemma-3-4B-IT\footnote{\url{https://huggingface.co/google/gemma-3-4b-it}} \cite{team2025gemma} on the training set for 5 epochs. The prompt is ``Describe this image''. Low-Rank Adaptation (LoRA) \cite{hu2022lora} is adopted for efficient parameter tuning. Configurations of LoRA are provided in Appendix \ref{appendix:training_configurations_image_captioning}.

\vspace{0.3em}
\noindent \textbf{Settings of Model Stealing.} We simulate the model stealing attacks including: \textbf{(1)} \emph{Direct-copy}: The adversary directly copies the parameters of the victim model; \textbf{(2)} \emph{Fine-tuning}: The adversary obtains the stolen model by fine-tuning the victim model on the substitute dataset for 5 epochs; \textbf{(3)} \emph{Data Distill}: We adopt the Align-KD method \cite{feng2025align} (using its open-source implementation\footnote{\url{https://github.com/fqhank/Align-KD}}) and conduct the distillation attack using the COCO2014 training set. \textbf{(4)} \emph{Hard Distill}: The stolen model is trained on the substitute dataset (with captions generated by the victim model) for 10 epochs. \textbf{(5)} \emph{Soft Distill}: The stolen model is trained on the substitute dataset for 10 epochs, using both captions and visual token representations from the victim model, using the multimodal distillation strategy of LLaVA-KD \cite{cai2025llava}. Regarding model architectures: for \emph{Direct-copy}, \emph{Fine-tuning}, \emph{Data Distill}, and \emph{Soft Distill} attacks, the stolen model adopts the same architecture as the victim model; for \emph{Hard Distill} attacks, when Qwen2.5-VL-7B-Instruct serves as the victim model, the stolen model is Gemma-3-4B-IT, and when Gemma-3-4B-IT is the victim model, the stolen model is Qwen2.5-VL-3B-Instruct \cite{bai2025qwen2}. In particular, we hereby exclude the Data-free Distill attack because, to the best of our knowledge, there exists no data-free knowledge distillation method tailored to image captioning, and directly extending existing techniques is ineffective. Current data-free distillation approaches are primarily developed for image classification \cite{yu2023data, liu2024small, tran2024nayer}, where class information naturally guides the synthesis of category-specific samples. In contrast, image captioning requires generating open-ended textual descriptions without categorical constraints, making these techniques unsuitable. Existing studies on vision–language distillation still depend on structured priors or paired data (\eg, image–caption pairs) to preserve semantic alignment \cite{xuan2023distilling, wei2025open, cai2025llava}. Synthesizing image–caption pairs that align with distillation objectives in a data-free manner remains highly challenging.

Consistent with Section \ref{sec:main_experiments}, we also include an ``Independent" baseline. The independent baseline model is obtained by fine-tuning the foundation model on the COCO2014 training set for 10 epochs. For Qwen2.5-VL-7B-Instruct as the victim, the independent baseline is Gemma-3-4B-IT; for Gemma-3-4B-IT as the victim, the independent baseline is Qwen2.5-VL-7B-Instruct.

\vspace{0.3em}
\noindent \textbf{Settings of Defenses.} For our method, detailed settings align with those in Section \ref{sec:main_experiments}, with task-specific adaptations to image captioning. The poisoned shadow model $P_{\text{cap}}$ is constructed by fine-tuning the victim model $V_{\text{cap}}$ on $\mathcal{D}_{p,\text{cap}} \cup \mathcal{D}_{b, \text{cap}} $ for 5 epochs. We set the modification rate $\gamma \% = 10\%$, resulting in $|\mathcal{D}_{s, \text{cap}}| = 1,600$. We adopt a white square placed at the bottom-right corner of the image as the backdoor trigger, with the corresponding target caption set to ``I don't know''. The benign shadow model $B_{\text{cap}}$ is obtained by fine-tuning the foundation model for 5 epochs on $\mathcal{D}_{f, \text{cap}}$, filtered by dropping $\lambda\% = 50\%$ of the lowest-loss samples from $V_{\text{cap}}$'s training set. The meta-classifier $C$ is trained for 100 epochs with a learning rate of 0.01.

\vspace{0.3em}
\noindent \textbf{Evaluation Metric.} The hypothesis test setting follows Section \ref{sec:main_setting}. Evaluation uses confidence score $\Delta \mu$ and p-value, calculated via hypothesis testing on 100 sampled images ($m=100$). For stolen models, lower p-value and higher $\Delta \mu$ indicate stronger verification; for independent models, higher p-value and lower $\Delta \mu$ reflect better resistance to misclassification.

\begin{table}[!t]
\centering
\vspace{-1.8em}
\caption{Results on image captioning task.}
\setlength{\tabcolsep}{0.75mm}{
\begin{tabular}{l|cc|cc}
\toprule
 \multirow{2}{*}{Model Stealing$\downarrow$}& \multicolumn{2}{c|}{Qwen2.5-VL-7B-Instruct} & \multicolumn{2}{c}{Gemma-3-4B-IT} \\
\cmidrule{2-5}
& $\Delta \mu$ & p-value & $\Delta \mu$ & p-value \\
\hline
Direct-copy & 0.90 &$10^{-36}$&0.89&$10^{-33}$\\
Fine-tuning &0.89&$10^{-20}$&0.81&	$10^{-31}$ \\
Data Distill & 0.89 &$10^{-19}$&0.86&$10^{-20}$  \\
Hard Distill &0.80&$10^{-14}$&0.79&$10^{-13}$  \\
Soft Distill & 0.75	&$10^{-11}$&0.78&$10^{-12}$ \\
\hline
Independent &  -0.99 &1.00&	-0.98&	1.00  \\
\bottomrule
\end{tabular}
}

\label{table:image_caption}
%\vspace{-1.0em}
\end{table}

\subsubsection{Results}

Table \ref{table:image_caption} presents the ownership verification results of \texttt{Holmes} on the COCO2014 image captioning task. For model stealing attacks, all strategies yield high $\Delta \mu$ (0.75–0.90) with p-values well below $\alpha$ ($10^{-11}$ to $10^{-36}$) across both victim models, indicating reliable ownership verification. For independent models, $\Delta \mu$ is negative (-0.98 to -0.99) and p-values are far above $\alpha$, confirming no false positives. These results validate \texttt{Holmes}' effectiveness in image captioning, extending its practicality beyond classification tasks.

\section{Potential Limitations and Future Directions}
\label{sec:potential_limitations}
Although our method demonstrates robust performance against various model stealing attacks (Section \ref{sec:main_experiments}), we acknowledge several limitations that warrant discussion.

\vspace{0.3em}
\noindent \textbf{Indirect Representation of Dataset-Specific Features.} Our method utilizes the label-inconsistent backdoor attack to disrupt the dataset-specific features learned by the victim model. While our experiments show that this approach is effective, we acknowledge that other techniques could be explored for decoupling common features and disrupting dataset-specific ones.

\vspace{0.3em}
\noindent \textbf{Computational Overhead.} Our method involves fine-tuning a poisoned shadow model, a benign shadow model, and training an external meta-classifier, which introduces additional computational overhead. However, training shadow models for reference is a common practice in model watermarking and fingerprinting to evaluate whether these techniques impact model performance \cite{jia2021entangled, maini2021dataset, wang2023free}. In our method, the fine-tuning cost of the shadow models is comparable to that of fine-tuning the victim model itself. For example, in the main experiments in Section \ref{sec:main_experiments}, fine-tuning a shadow model on the CIFAR-10 dataset takes 1 hour and 58 minutes, while fine-tuning on the ImageNet dataset takes 1 hour and 7 minutes, both using an NVIDIA A100 GPU. Besides, this process can be accelerated through parallel GPU execution. The training cost for the meta-classifier is relatively mild, as its structure is lightweight and requires only a limited number of training samples. For example, training it on CIFAR-10 and ImageNet takes approximately 5 minutes on an NVIDIA A100 GPU.

\vspace{0.3em}
\noindent \textbf{Generalization to Diverse Data Domains.} While our method demonstrates effectiveness on standard vision benchmarks (CIFAR-10 and ImageNet), its robustness on specialized domains ($e.g.$, medical images or low-resolution images) remains unverified. The variations in data characteristics of these specialized domains, such as texture patterns or semantic granularity, might affect the decoupling of common and dataset-specific features, warranting further investigation.

\section{Conclusion}
In this paper, we revisited the challenge of verifying ownership for personalized large vision models. We revealed that existing methods, which are typically designed for models trained from scratch, are either ineffective for fine-tuned models, introduce additional security risks, or are prone to misjudgment. To address these issues, we introduced a new fingerprinting paradigm and proposed a harmless model ownership verification method by decoupling common features. By creating a poisoned shadow model (retaining common features but disrupting dataset-specific ones via backdoor attacks) and a benign shadow model (learning distinct dataset-specific features), we distinguished the victim model's dataset-specific features using output differences without altering the victim model or its training process. A meta-classifier and hypothesis test further enhanced verification reliability. We evaluated our method on the CIFAR-10 and ImageNet datasets, and the results demonstrated its effectiveness in detecting various types of model stealing attacks simultaneously.

\section*{Declarations}

\vspace{0.3em}
\noindent \textbf{Data Availability.} The experimental data that support the findings of this study are publicly available, including \href{https://www.cs.toronto.edu/~kriz/cifar.html}{CIFAR-10} \cite{krizhevsky2009learning}, \href{https://image-net.org/index.php}{ImageNet} \cite{deng2009imagenet}, and \href{https://cocodataset.org/#home}{COCO2014} \cite{lin2014microsoft}.

\vspace{0.3em}
\noindent \textbf{Code Availability.} Our codes are available at \url{https://github.com/zlh-thu/Holmes}.

\bibliography{ref}% common bib file
\clearpage
\onecolumn 
\begin{appendices}
\setcounter{theorem}{0}

%\clearpage
%\onecolumn
%\begin{appendices}

\section{Proof of Theorem 1}
\label{appendix:proof_theorem_1}
\setcounter{theorem}{0}
\setcounter{equation}{0}
\setcounter{definition}{0}

We hereby provide the proof of our Theorem \ref{thm1_app}. In general, it is based on the property of $t$-distribution.

\begin{definition} Let $\bm{X}$ be a random variable representing samples from $\mathcal{D}_s$. Define $\mu_{S}$ and $\mu_{B}$ as the posterior probabilities of the events $C(\bm{\bm{O}_s}(\bm{x})) = +1$ and $C(\bm{O}_s(\bm{x})) = -1$, respectively. Given the null hypothesis $H_0: \mu_{B} + \tau = \mu_{S}$ ($H_1: \mu_{B} + \tau < \mu_{S}$) where $\tau \in [0, 1]$, we claim that the suspicious model $S$ is stolen from the victim (with $\tau$-certainty) if and only if $H_0$ is rejected.
\end{definition}

\vspace{0.3em}
\begin{theorem}\label{thm1_app}
Let $\bm{X}$ be a random variable representing samples from $\mathcal{D}_s$. Assume that $\mu_{B} \triangleq \mathbb{P}(C(\bm{O}_s(\bm{X})) = -1) < \beta$. We claim that the verification process can reject the null hypothesis $H_0$ at the significance level $\alpha$ if the identification success rate $R$ of $C$ satisfies that
\begin{equation}
    R > \frac{2(m-1)(\beta+\tau)+t_{1-\alpha}^2 + \sqrt{\Delta} }{2(m-1+t_{1-\alpha}^2)},
\end{equation}
where $\Delta = t_{1-\alpha}^4 + 4t_{1-\alpha}^2(m-1)(\beta+\tau)(1-\beta - \tau)$, $t_{1-\alpha}$ is the $(1-\alpha)$-quantile of $t$-distribution with $(m-1)$ degrees of freedom, and m is the sample size of $\bm{X}$.
\end{theorem}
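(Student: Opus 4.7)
The approach is to recast the verification test as a one-sample, one-sided $t$-test on Bernoulli indicators, use the bound $\mu_B<\beta$ to replace the paired null $H_0$ by a conservative single-parameter null, and then reduce the resulting rejection condition to a quadratic inequality in $R$ whose larger root gives the claimed threshold.

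First, for each queried $\bm{X}_i$, I would introduce $Y_i \triangleq \mathbf{1}[C(\bm{O}_s(\bm{X}_i))=+1]$ so that the observed identification rate is $\bar{Y}=R$. Since $C$ is binary we have $\mu_S+\mu_B=1$, and under $H_0$ combined with $\mu_B<\beta$ one obtains $\mu_S<\beta+\tau$. Hence it suffices to reject the (stronger) one-sided worst-case null $H_0'\colon \mu_S = \beta+\tau$ against $H_1'\colon \mu_S > \beta+\tau$ at level $\alpha$; any such rejection automatically rejects the original $H_0$. This reduction is the key conceptual step that eliminates the paired structure and lets the analysis run through the standard one-sample $t$-distribution calibration with $(m-1)$ degrees of freedom.

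Next, I would write the one-sample $t$-statistic for the $Y_i$'s. Because each $Y_i\in\{0,1\}$, the sample variance collapses to $s_Y^2 = mR(1-R)/(m-1)$, yielding
\begin{equation*}
T \;=\; \frac{R-(\beta+\tau)}{s_Y/\sqrt{m}} \;=\; \frac{(R-\beta-\tau)\sqrt{m-1}}{\sqrt{R(1-R)}}.
\end{equation*}
The test rejects when $T>t_{1-\alpha}$. Squaring (valid because the threshold to be derived exceeds $\beta+\tau$, so $R-\beta-\tau>0$ in the relevant regime) gives $(m-1)(R-\beta-\tau)^2 > t_{1-\alpha}^2\, R(1-R)$, which after collecting terms becomes the quadratic
\begin{equation*}
\bigl[(m-1)+t_{1-\alpha}^2\bigr]R^2 - \bigl[2(m-1)(\beta+\tau)+t_{1-\alpha}^2\bigr]R + (m-1)(\beta+\tau)^2 \;>\; 0.
\end{equation*}

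Finally, I would apply the quadratic formula. A direct expansion of the cross terms collapses the discriminant to $\Delta = t_{1-\alpha}^4 + 4\,t_{1-\alpha}^2(m-1)(\beta+\tau)(1-\beta-\tau)$, and the larger of the two roots matches the bound stated in Theorem~\ref{thm1_app} exactly. The main obstacle is the conceptual reduction from the pairwise null to the single-parameter worst-case null $H_0'$: once this step is justified via $\mu_B<\beta$, everything that remains is routine algebra. A small bookkeeping issue is checking that the squaring step preserves the direction of the inequality, which is automatic because the claimed threshold is always at least $\beta+\tau$ (both roots of the quadratic lie on the same side of the vertex whenever $\Delta\ge 0$, which it is).
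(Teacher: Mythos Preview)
Your proposal is correct and follows essentially the same route as the paper: reduce via $\mu_B<\beta$ to the one-sided null $\mu_S\le\beta+\tau$, form the one-sample $t$-statistic with the Bernoulli sample variance $s^2=\tfrac{m}{m-1}R(1-R)$, square, and solve the resulting quadratic for the larger root. The paper additionally invokes the CLT to justify the Gaussian/$t$ calibration and locates the roots via sign checks of the quadratic at $0$, $\beta+\tau$, and $1$; your parenthetical that ``both roots lie on the same side of the vertex'' is a slip (distinct real roots are symmetric about the vertex), but your earlier and correct observation that the larger root exceeds $\beta+\tau$ is exactly what the paper's sign analysis establishes.
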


\begin{proof}

Since $\mu_{B} < \beta$, the original hypotheses $H_0: \mu_{B} + \tau = \mu_{S}$ and $H_1: \mu_{B} + \tau < \mu_{S}$ can be equivalently formulated as:

\begin{gather}
 H_0': \mu_{S} < \beta + \tau, \\
 H_1': \mu_{S} > \beta + \tau.
\end{gather}
Let $E$ indicates the event of whether the suspect model $S$ can be identified by the meta-classifier $C$. As such, 
\begin{equation}
    E \sim \mathcal{B}(1,p),
\end{equation}
where $p  = \Pr(C(\bm{O}_s(\bm{X})) = +1)$ indicates $C$ identify $S$ as a stolen model and $\mathcal{B}$ is the Binomial distribution \cite{hogg2005introduction}.

Let $\bm{x_1}, ..., \bm{x_m}$ denote $m$ samples from $\mathcal{D}_s$ used for ownership verification and $E_1,...,E_m$ denote their prediction events, we know that the identification success of $C$ rate $R$ satisfies
\begin{gather}
 R =  \frac{1}{m} \sum_{i=1}^{m} E_i, \\
 R \sim \frac{1}{m} \mathcal{B}(m,p).
\end{gather}
According to the central limit theorem \cite{hogg2005introduction}, the identification success rate $a$ follows Gaussian distribution $\mathcal{N}(p,\frac{p(1-p)}{m})$ when $m$ is sufficiently large. Similarly, $( R-\beta-\tau)$ also satisfies Gaussian distribution. As such, we can construct the t-statistic as follows
\begin{equation}
    T\triangleq\frac{\sqrt{m}(R-\beta-\tau)}{s}\sim t(m-1),
\end{equation}
where $s$ is the standard deviation of $(R-\beta -\tau )$,
\begin{equation}
\label{eq:std}
    \begin{aligned}
s^2=\frac{1}{m-1}\sum_{i=1}^m(E_i-R)^2=\frac{1}{m-1}(m\cdot R-m\cdot R^{2}).
\end{aligned}
\end{equation}
To reject the hypothesis $H_0'$ at the significance level $\alpha$, we need to ensure that
\begin{equation}
\label{eq:reject_h1}
    \frac{\sqrt{m}(R-\beta-\tau)}{s}>t_{1-\alpha},
\end{equation}
where $t_{1-\alpha}$ is the $(1-\alpha)$-quantile of $t$-distribution with $(m-1)$ degrees of freedom.

According to Equation \ref{eq:std} and Inequality \ref{eq:reject_h1}, we have
\begin{equation}
\label{iq:app_1}
    \sqrt{m-1}\cdot(R-\beta-\tau) - t_{1-\alpha}\cdot\sqrt{R-R^{2}}>0.
\end{equation}

To hold the Inequality \ref{iq:app_1}, two conditions must be satisfied:
\begin{equation}
    R > \beta + \tau,
\end{equation}
and
\begin{equation}
\label{iq:app_2}
    \sqrt{m-1}\cdot(R-\beta-\tau) > t_{1-\alpha}\cdot\sqrt{R-R^{2}}.
\end{equation}

The quadratic inequality of Inequality \ref{iq:app_2} is as: 
\begin{equation}
\label{iq:app_3}
\small (m-1+t_{1-\alpha}^2)R^2 -(2(m-1)(\beta+\tau)+t_{1-\alpha}^2)R + (m-1)(\beta+\tau)^2 > 0.
\end{equation}

For analyzing the valid interval of $R$, we consider the quadratic function as follow:
\begin{equation}
\label{iq:app_4}
\small f(R) = (m-1+t_{1-\alpha}^2)R^2 -(2(m-1)(\beta+\tau)+t_{1-\alpha}^2)R + (m-1)(\beta+\tau)^2.
\end{equation}

The discriminant of $f(R)$ is given by $\Delta = t_{1-\alpha}^4 + 4t_{1-\alpha}^2(m-1)(\beta+\tau)(1-\beta - \tau) > 0$. Thus, the two distinct real roots are given by:
\begin{equation}
\label{iq:app_5}
    R_{1,2} = \frac{2(m-1)(\beta+\tau)+t_{1-\alpha}^2 \pm \sqrt{\Delta} }{2(m-1+t_{1-\alpha}^2)}.
\end{equation}
We can also find that $f(0) = (m-1)(\beta+\tau)^2 > 0$, $f(\beta+\tau) = t_{1-\alpha}^2(\beta+\tau)(\beta+\tau-1)<0$, and $f(1) = (m-1)(\beta+\tau-1)^2$.

According to the intermediate value theorem \cite{andreescu2017intermediate}, since $f(R)$ transitions from positive to negative in $(0, \beta+\tau)$, there must exist a root $R_1$ in this interval. Similarly, since $f(R)$ transitions from negative to positive in $(\beta+\tau, 1)$, there must be a root $R_2$ in this interval. Thus, we have the strict order:
\begin{equation}
\label{iq:app_6}
    0< R_1 < (\beta+\tau) < R_2 <1.
\end{equation}

Because $f(R)$ is positive for $R < R_1$ and $R > R_2$, and negative for $R_1 < R < R_2$, it follows that the Inequality \ref{iq:app_3} is satisfied for $R<R_1$ or $R > R_2$. Given the additional constraint that $R > \beta + \tau$, the only valid solution is
\begin{equation}
\label{iq:app_7}
    R > \frac{2(m-1)(\beta+\tau)+t_{1-\alpha}^2 + \sqrt{\Delta} }{2(m-1+t_{1-\alpha}^2)}.
\end{equation}

\end{proof}

\section{Pseudocode of \texttt{Holmes}}
\label{appendix:pseudocode}
The pseudocode of \texttt{Holmes} is in Algorithm \ref{alg:Holmes}.
\begin{algorithm*}[t]
\caption{The main pipeline of \texttt{Holmes}.}
\label{alg:Holmes}
\begin{algorithmic}[1]
\STATE \textbf{Input:} Victim model $V$, foundation model $F$, suspicious model $S$, training set $\mathcal{D}$, shadow model modification rate $\gamma \%$, shadow model dropping rate $\lambda \%$, target class $y_t$, trigger pattern $\bm{t}$, poisoned image generator $G(\cdot)$, number of samples $m$ for the hypothesis test, and the significance level $\alpha$.
\STATE \textbf{Output:} Ownership verification result for suspicious model $S$.

\STATE \textbf{Step 1: Create Shadow Models}
\STATE \textbf{(a) The Poisoned Shadow Model $P$}:
\STATE Select $\gamma\%$ samples from $\mathcal{D}$ with the lowest loss values to create $\mathcal{D}_s$
\STATE Generate poisoned set: $\mathcal{D}_p = \{ (\bm{x}', y_t) |\bm{x}' = G(\bm{x}; \bm{t}), (\bm{x}, y) \in \mathcal{D}_s \}$
\STATE  Fine-tune the victim model $V$ on $\mathcal{D}_p \cup \mathcal{D}_b$ to create the poisoned shadow model $P$, where $\mathcal{D}_b \triangleq \mathcal{D} \backslash \mathcal{D}_s$
\STATE  \textbf{(b) The Benign Shadow Model $B$}:
\STATE Drop $\lambda\%$ samples from $\mathcal{D}$ with the lowest loss values to create $\mathcal{D}_f$
\STATE Fine-tune the foundation model $F$ on $\mathcal{D}_f$ to create the benign shadow model $B$

\STATE \textbf{Step 2: Train Ownership Meta-Classifier $C_w$}
\STATE Initialize the training set for meta-classifier $\mathcal{D}_c = \emptyset$
    \FOR{each $(\bm{x}, y) \in \mathcal{D}_s$}
        \STATE Compute the output difference between $V$ and $P$: $\bm{O}_v(\bm{x}) = V(\bm{x})-P(\bm{x})$
        \STATE Compute the output difference between $B$ and $P$: $\bm{O}_b(\bm{x}) = B(\bm{x})-P(\bm{x})$
        \STATE $\mathcal{D}_c =  \mathcal{D}_c \cup \{(\bm{O}_v(\bm{x}), +1)\}$
        \STATE $\mathcal{D}_c =  \mathcal{D}_c \cup \{(\bm{O}_b(\bm{x}), -1)\}$
    \ENDFOR
\STATE Train the meta-classifier $C$ on the constructed set $\mathcal{D}_c$
\STATE \textbf{Step 3: Ownership Verification}
    \STATE Randomly sample $m$ samples from $\mathcal{D}_s$
    \STATE Compute the output difference between $S$ and $P$ for these $m$ samples: $\bm{O}_s(\bm{x}) = S(\bm{x})-P(\bm{x})$
    \STATE Conduct single-tailed pair-wise T-test using  $\bm{O}_s(\bm{x})$ and $C$
    \IF{p-value $< \alpha$}
        \STATE Claim $S$ is stolen from $V$
    \ELSE
        \STATE Claim $S$ is not stolen from $V$
    \ENDIF
\end{algorithmic}
\end{algorithm*}

\section{LoRA Configurations of Experiments on Image Captioning}
\label{appendix:training_configurations_image_captioning}

Detailed training configurations of LoRA are provided in Table \ref{tab:lora_params}.

\begin{table}[htbp]
  \centering
  \caption{LoRA Fine-tuning Configuration}
  \begin{tabular}{ll}
    \toprule
    Parameter Name       & Value          \\
    \midrule
    Learning Rate & $10^{-4}$\\
    Warmup Ratio & 0.05 \\
    Gradient Accumulation Steps & 8 \\
    LoRA Rank           & 8 \\
    LoRA Alpha          & 32 \\
    Target Modules      & All Linear Layers \\
    Freeze Vit & True \\
    Training Epochs & 5 \\
    \bottomrule
  \end{tabular}
  \label{tab:lora_params}
\end{table}

\end{appendices}

\end{document}